\newtcolorbox{highlighted}{colback=yellow,breakable}
\newtheorem{definition}{Definition}
\definecolor{tamercolor}{rgb}{0.5, 0.0, 0.13}
\definecolor{forest}{rgb}{0.25, 0.74, 0.0}
\newcommand{\NAMEA}{FIDDLE}
\begin{document}

\title{\NAMEA: Reinforcement Learning for Quantum Fidelity Enhancement}

\author{Hoang M. Ngo}
\orcid{1234-5678-9012}
\affiliation{%
  \institution{Department of Computer and Information Science and Engineering, University of Florida}
  \city{Gainesville}
  \state{Florida}
  \country{USA}
}
\author{Tamer Kahveci}
\orcid{1234-5678-9012}
\affiliation{%
  \institution{Department of Computer and Information Science and Engineering, University of Florida}
  \city{Gainesville}
  \state{Florida}
  \country{USA}
}
\author{My T. Thai}
\orcid{1234-5678-9012}
\affiliation{%
  \institution{Department of Computer and Information Science and Engineering, University of Florida}
  \city{Gainesville}
  \state{Florida}
  \country{USA}
}

\renewcommand{\shortauthors}{Hoang Ngo et al.}

\begin{abstract}

Quantum computing has the potential to revolutionize fields like quantum optimization and quantum machine learning. However, current quantum devices are hindered by noise, reducing their reliability. A key challenge in gate-based quantum computing is improving the reliability of quantum circuits, measured by process fidelity, during the transpilation process, particularly in the routing stage. In this paper, we address the Fidelity Maximization in Routing Stage (FMRS) problem by introducing \mbox{\NAMEA}, a novel learning framework comprising two modules: a Gaussian Process-based surrogate model to estimate process fidelity with limited training samples and a reinforcement learning module to optimize routing. Our approach is the first to directly maximize process fidelity, outperforming traditional methods that rely on indirect metrics such as circuit depth or gate count. We rigorously evaluate \mbox{\NAMEA} by comparing it with state-of-the-art fidelity estimation techniques and routing optimization methods. The results demonstrate that our proposed surrogate model is able to provide a better estimation on the process fidelity compared to existing learning techniques, and our end-to-end framework significantly improves the process fidelity of quantum circuits across various noise models.

\end{abstract}

\keywords{Quatum computing, quantum circuits, optimization, fidelity}

\maketitle

\section{Introduction}
\label{sec:introduction}
Despite its tremendous potential, current quantum computing faces a fundamental challenge: reliability. As noted in~\cite{Preskill2018}, we are in the Noisy Intermediate-Scale Quantum (NISQ) era, where quantum devices are highly susceptible to noise and decoherence. This vulnerability significantly reduces the reliability of NISQ systems as the scale and complexity of computations increase. Consequently, this limitation restricts the scalability of quantum computers for practical applications and remains one of the most significant obstacles to realizing the full promise of quantum computing. 

Gate-based quantum computing, as known as universal quantum computing, is one of the most prominent quantum paradigms. It is used to drive quantum advancements in diverse applications, such as quantum optimization~\cite{Farhi2014,Mohanty2023,Azad2023,Golden2023} and quantum machine learning ~\cite{schuld2019,Biamonte2017,Allcock2020, Ma2021, Jerbi2021QuantumML}.
In the quantum gate-based paradigm, computations required by quantum algorithms are represented as quantum circuits, which are generated through a transpilation process. Therefore, enhancing the reliability of quantum circuits during transpilation is crucial for advancing the gate-based quantum paradigm to the next stage of development.

In order to improve the reliability of quantum circuits, we need to address two critical tasks: (1) defining metrics to quantify reliability and (2) developing techniques to improve reliability based on those metrics. For the first task, several reliability metrics have been proposed, including Probability of Successful Trial (PST), Hellinger Fidelity (HF), and Process Fidelity (PF)~\cite{Resch2021}. Specifically, PST quantifies the probability of obtaining a specific desired outcome after running the quantum process, while HF measures the similarity between the probability distributions of the ideal state and the actual probabilities measured in the $Z$-basis. However, both metrics are limited to computational $Z$-basis measurements. That leads to a failure in distinguishing states which yield identical $Z$-basis distributions, like $\frac{1}{\sqrt{2}}|0\rangle + \frac{1}{\sqrt{2}}|1\rangle$ and $\frac{1}{\sqrt{2}}|0\rangle - \frac{1}{\sqrt{2}}|1\rangle$. In contrast, PF evaluates the overlap between ideal and actual quantum states across $4^N$ bases (i.e., $N$ is the number of qubits), offering a more reliable metric. 
Exact calculation of PF is QSZK-hard~\mbox{\cite{Watrous2002, Wang2023}}, so methods for calculating PF like direct fidelity estimation~\mbox{\cite{Flammia_2011}} are computationally expensive. To accelerate PF estimation, learning-based techniques, which are trained on computed fidelity samples to predict the fidelity of unseen circuits, have been investigated. Notable approaches include deep neural network-based method~\mbox{\cite{Zhang2021}}, and graph neural network-based method~\mbox{\cite{Saravanan_2024}}. However, these approaches require a large number of training samples including circuits and their PF values. Thus, learning-based techniques in the existing literature are impractical when the sample availability is limited due to the time-consuming calculation of PF values.

For the second task, due to the challenge of accurately calculating reliable fidelity metrics like PF, most methods optimize factors indirectly related to circuit reliability, such as circuit depth and gate count. Circuit optimization typically occurs during the transpilation process, which involves three major stages: layout (mapping logical to hardware qubits), routing (ordering gates and adding SWAPs), and translation (converting to native gates). For example, works in~\mbox{\cite{Zulehner2018,Ostaszewski2021,Pozzi2022}} focus on routing to minimize gate count and depth, while~\mbox{\cite{Fan2022}} combines layout and routing optimization. Alam et al.~\mbox{\cite{Alam2020}} focused on routing circuits for the Quantum Approximate Optimization Algorithm (QAOA). The circuits used for QAOA contains mainly commuting gates (i.e., $R_{zz}$) which can be reordered without affecting the outcome. As a result, routing for QAOA circuits requires additional consideration of gate ordering to optimize circuit reliability. Unlike depth and gate count optimization, Saravanan et al.~\mbox{\cite{Saravanan_2024}} directly optimize circuit reliability using PST and HF. However, as mentioned above, PST and HF do not fully capture the actual reliability of quantum circuits like PF. Thus, this approach may provide only a partial improvement in the reliability.

In this work, we study the problem of optimizing the routing stage in the transpilation process to maximize the PF of the resulting quantum circuit. To our knowledge, this is the first work to directly optimize the PF of quantum circuits. First of all, we introduce the concept of gate sequence as the output of the routing stage and establish its connection to the circuit which is the end-to-end solution of the transpilation process. This linkage allows us to evaluate the efficiency of gate sequences based on circuit fidelity. Based on this, we formally define the Fidelity Maximization in Routing Stage (FMRS) problem. This problem aims to find a valid gate sequence in the routing stage to maximize the PF of the resulting circuit.

Next, we propose a novel learning framework, named \mbox{\NAMEA}, which seeks for the optimal gate sequence in order to maximize the circuit fidelity. Our proposed framework comprises two key learning modules which aim to address two aforementioned critical tasks for improving circuit reliability. For the first module, in order to reduce the computational cost of PF estimation, we propose a Gaussian Process (GP) -based surrogate model to estimate the PF of quantum circuits. Different with other deep learning techniques requiring extensive training samples, our GP-based surrogate model can excel with limited training samples by learning the underlying data distribution. Then, we propose a reinforcement learning (RL) module trained on PF values predicted by the surrogate model to identify optimal gate sequences. By directly optimizing PF, the RL module is able to provide more effective solutions than traditional methods relying on indirect metrics like circuit depth or gate count. 

Finally, we rigorously evaluate our proposed framework by comparing it against state-of-the-art methods for fidelity estimation and routing optimization. We assess the accuracy of our GP-based surrogate model by measuring its deviation from the direct fidelity estimation in~\mbox{\cite{Flammia_2011}}, which closely approximates the exact PF. We also compare against the latest learning-based fidelity estimation technique proposed in~\mbox{\cite{Saravanan_2024}} to illustrate the superior of our proposed learning technique with limited training samples. Our end-to-end framework, \mbox{\NAMEA}, is then evaluated against VIC~\mbox{\cite{Alam2020}}, a state-of-the-art routing method for QAOA circuits, and Qiskit~\mbox{\cite{qiskit}}, a widely used off-the-shelf transpilation framework, across five different noise models. The results demonstrate that our approach significantly improves the PF of quantum circuits under every noise model.

\textbf{Organization.} The rest of the paper is structured as follows. Sections~\ref{sec:relatedworks} and \ref{sec:preliminaries} summarize the related work and introduce the definition of minor embedding
and an overview of reinforcement learning, respectively. Our problem is rigorously defined in Section \ref{sec:problem_definition}, where our method and its theoretical
analysis are described in Section~\ref{sec:proposedapproach}. Section~\ref{sec:experiments} presents our
experimental results. Finally, section~\ref{sec:conclusion} concludes the paper.



\section{Related works}
\label{sec:relatedworks}
In this section, we present illustrative works in literature on two critical tasks for improving the reliability of quantum circuits.

For the first task, PF is widely regarded as the standard for quantifying the reliability of quantum processes. 
However, PF is QSZK-hard to compute exactly~\cite{Watrous2002, Wang2023}, so there are no efficient methods for its estimation in general scenarios. One straightforward approach involves obtaining a complete classical description of quantum states as density matrices through quantum state tomography~\cite{Gross2010, Cramer2010}. However, this method requires exponentially growing resources as the scale of the quantum system increases. For specific cases where the ideal state is pure, several techniques have been proposed for estimating PF. One such method, called entanglement witnesses\cite{Tokunaga2006, Otfried2007}, computes fidelity with a limited number of measurements, but it only works for certain pure quantum states. This limitation was addressed in\cite{Flammia_2011}, which introduced a direct fidelity estimation method for arbitrary pure states. More recently, a deep neural network-based approach has been developed for estimating PF~\cite{Zhang2021}. However, the effectiveness of this method is heavily reliant on the availability of a large number of training samples (i.e., circuits and their corresponding PF values). As a result, this approach struggles to train effectively on large quantum circuits, where obtaining the PF values is time-consuming.

For the second task, the paper~\cite{Zulehner2018} proposes a two-step routing method to reduce circuit depth on IBM-QX hardware. This technique first partitions the quantum circuit into layers, ensuring that all gates within a layer are compatible with the hardware's topology through an appropriate layout. Then, an $A^*$-based algorithm is employed for each layer to determine the minimal number of SWAP gates required to adjust the layout for alignment with the subsequent layer. This approach demonstrates notable improvements in both runtime performance and circuit depth compared to IBM’s native solution. Machine learning has proven to be a powerful tool for addressing these challenges. For instance, a machine learning approach has been developed for the initial placement of quantum circuits (i.e., the layout optimization) by implementing automatic feature selection. This technique focuses on significant subcircuits using a configurable Gaussian function. In contrast, reinforcement learning has shown its effectiveness in tackling the gate placement problem (i.e., the routing pass). Specifically, the works~\cite{Ostaszewski2021,Pozzi2022} utilize a modified deep Q-learning framework to address the routing pass, aiming to minimize the depth of the resulting circuit. Here, the routing problem is modeled as a Markov decision process, where the state represents the quantum circuit, the action corresponds to gate placement within the circuit, and the reward reflects metrics such as circuit depth or gate count, depending on the optimization objective. The paper~\cite{Fan2022} presents a hybrid approach combining the layout and routing stages by framing the circuit placement problem as a bi-level optimization task. In this framework, the upper-level problem optimizes the initial qubit mapping, while the lower-level problem focuses on minimizing the number of required SWAP gates for a given upper-layer's solution, formulated as a combinatorial optimization problem. To address this, the authors propose an evolutionary algorithm (EA) for optimizing the upper-level mapping and employ policy-based deep reinforcement learning to solve the lower-level routing problem. 

The work in ~\cite{Alam2020} studies the optimization of the routing stage for circuits derived from the Quantum Approximate Optimization Algorithm (QAOA). In QAOA circuits, operations (i.e., $R_{ZZ}$ gate)s are commutative. It means that their execution order can be freely swapped without affecting the final state of the quantum circuit. This commutativity, while beneficial for circuit design, also makes routing more complex, since a large number of potential gate orderings must be considered. To address this, the authors propose several techniques to rank the execution order of $R_{ZZ}$ gates in a way that minimizes the circuit depth or gate count. This work is regarded as one of the state-of-the-art routing techniques for QAOA circuits.

Unlike approaches that focus on circuit depth and gate count, the work by Saravanan et al.~\cite{Saravanan_2024} targets the optimization of a circuit's PST and HF. Their method employs a routing framework that integrates reinforcement learning (RL) with graph neural networks (GNN) to directly optimize these metrics. Specifically, the GNN model predicts the PST and HF of a circuit based on its structure, and these predicted values are used as reward signals in the RL-based routing framework. The GNN-based estimator in this approach benefits from the relative ease of computing PST and HF, allowing it to be trained on a large dataset. However, this advantage does not apply to PF, which is considerably more time-consuming to calculate. As a result, acquiring a sufficiently large training dataset for PF becomes impractical, restricting the scalability of their method when applied to large circuits. Additionally, the proposed RL framework has a drawback. If the selection of a gate does not lead to the formation of an executable layer, the state remains unchanged (i.e., $s_{t+1} = s_t$). This design can result in infinite loops during training, obstructing the convergence of the RL policy.

\section{Preliminaries}
\label{sec:preliminaries}
\subsection{General concepts in quantum computing}
Quantum computing relies on quantum bits (qubits), which are the quantum counterparts of classical bits. Unlike classical bits that are either 0 or 1, a qubit can be in a superposition of both states simultaneously. The state of a qubit is represented as "ket vector" $|\psi\rangle$ which is a linear combination of basis states $|0\rangle$ and $|1\rangle$ as: $$|\psi\rangle = \alpha_0 |0\rangle + \alpha_1 |1\rangle$$

In a system of $N$ qubits, the quantum state is represented by $2^N$ basis states. For instance, in a 2-qubit system, the state can be expressed as:
$$|\psi\rangle = \alpha_{00} |00\rangle + \alpha_{01} |01\rangle + \alpha_{10} |10\rangle + \alpha_{11} |11\rangle$$

Quantum operators, or quantum gates, are unitary transformations that describe the evolution of a closed quantum system. For example, Pauli gates, denoted as $X$, $Y$ and $Z$, perform state flips, or the Hadamard gate, denoted as $H$, creates superpositions. The transformation from the state $|\psi\rangle$ to the state $|\psi'\rangle$ by the operator $U$ can be presented as: $$|\psi'\rangle = U|\psi\rangle$$

The state of a closed quantum system can be observed by a collection $\{M_m\}$ of measurement operators. The index $m$ indicates a possible measurement outcome of the state. Specifically, by measuring the state $|\psi\rangle$ by $\{M_m\}$, the probability of $|\psi\rangle$ being $m$ can be calculated as: $$Pr(m) = \langle\psi|M_m^\dagger M_m|\psi\rangle$$


\subsection{Mixed state and quantum noise model}
Due to inherent instability, the true state (a.k.a \emph{pure state}) of a quantum system is often not fully known. In such cases, the state of a quantum system is described as a statistical mixture of different possible pure states, called \emph{mixed state}. Unlike pure states, which are represented by ket vectors, mixed states are represented by density matrices. Specifically, for a quantum system in a mixed state, which can be in one of pure states $|\psi_i\rangle$ with the probability $p_i$, the density matrix is given by: $$\rho = \sum_i p_i |\psi_i\rangle\langle\psi_i|$$

The evolution of mixed quantum states is described by quantum channels, which capture the transformations induced by one or more gates applied to a mixed state. For example, consider a unitary channel $\mathcal{U}$ corresponding to a single gate $U$, the evolution of the density matrix $\rho$ through the channel $\mathcal{U}$ can be expressed as:
$$\mathcal{U}(\rho) = \sum_i p_i (U|\psi_i\rangle)(\langle\psi_i|U^\dagger) = U\rho U^\dagger$$


The concept of mixed states is useful in modeling the noise in quantum systems. Under the presence of the environmental noise, the state of the quantum system can either remain unchanged or transform to other states with certain probabilities. Thus, quantum noise can be formally described using quantum channels, which are completely positive trace-preserving (CPTP) maps acting on the density matrices. Specifically, the output of a noise channel $\mathcal{E}$ on a density matrix $\rho$ is as follows:

$$
\mathcal{E}(\rho) = \sum_i K_i \rho K_i^\dagger
$$

where $\{K_i\}$ are the Kraus operators satisfying $\sum_i K_i^\dagger K_i = I$. This representation allows modeling complex noise processes by decomposing them into simpler, more manageable components.

Let's consider an example of the bit-flip noise channel where qubits can flip from $|0\rangle$ to $|1\rangle$ or vice versa with the probability $p$. The Kraus operators of this channel is $K_0 = \sqrt{1-p} I$ and $K_1 = \sqrt{p} X$, where $X$ is the bit flip (Pauli-X) operation.

\begin{figure*}[t]
  \centering
  \includegraphics[width=1\textwidth]{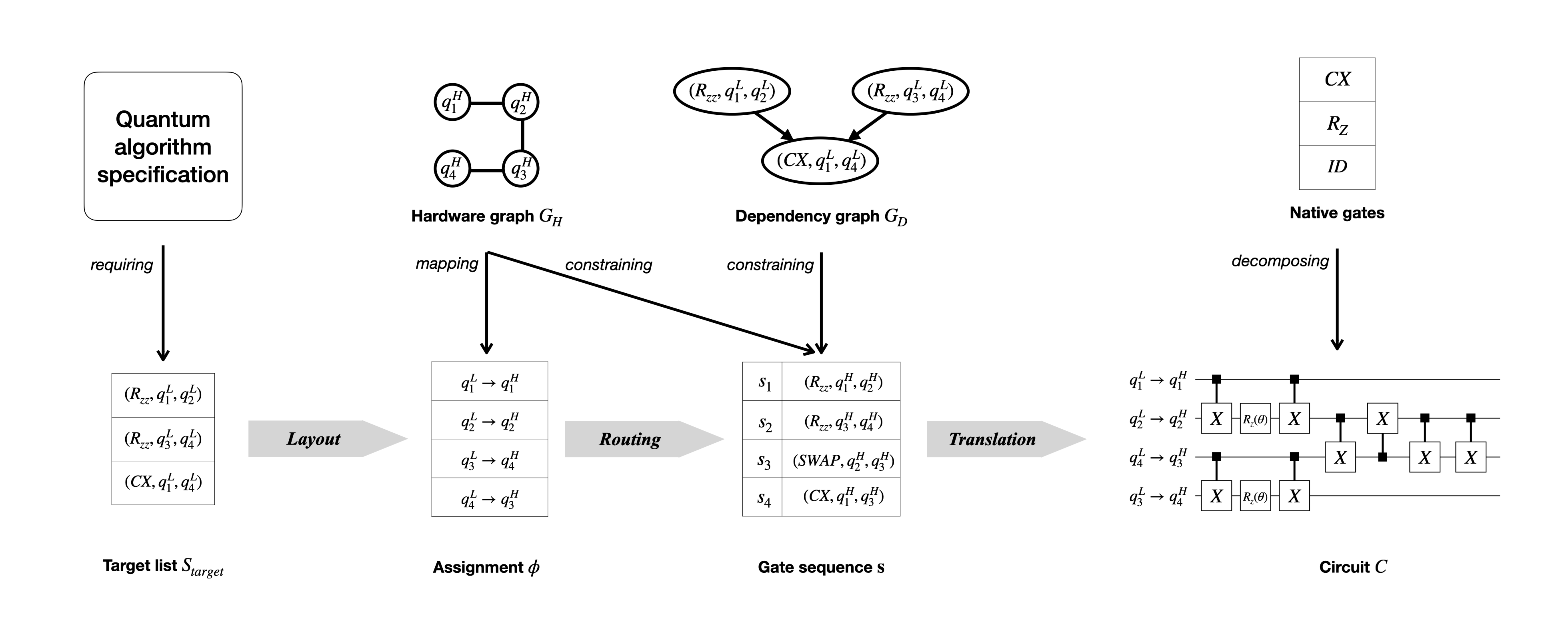}
\caption{The overview of transpilation process which involves three key stages: layout, routing, and translation. Starting with a target gate list required from a quantum algorithm, the layout stage maps logical qubits to physical qubits. Then, the routing stage determines the sequence of gates to be executed such that the gate sequence satisfies to dependencies and connectivity constraints (i.e., by inserting SWAP gates). Finally, the translation stage decomposes the target gates into native gates supported by the QPU. The result is an optimized quantum circuit.} 
\label{fig:example:overview}
\end{figure*}

\subsection{Transpilation process}

A quantum algorithm is initially expressed in terms of a list of target gates, which represent the logical operations required to solve a problem. To execute the algorithm on quantum hardware, they must undergo a process called \emph{transpilation}, which converts the abstract algorithm into a hardware-compatible quantum circuit. Transpilation consists of three key stages: layout, routing, and translation. The overview of the transpilation process is illustrated in Figure~\ref{fig:example:overview}.

In the layout stage, a mapping between the logical qubits defined in the quantum algorithm and the physical qubits available on the hardware (i.e., QPU) is established. Since the connectivity of physical qubits is sparse, this step must carefully optimize the assignment to minimize the overhead (i.e., the number of gates used, the number of layers used, or the circuit reliability) introduced by limited qubit connectivity. A poor initial layout can lead to excessive overheads in the routing and translation stages.

The routing stage in the transpilation process determines the sequence of gates which satisfies two key constraints: gate dependencies and hardware connectivity. Gate dependencies ensure that certain operations are executed in the correct order to maintain the algorithm's logical flow. Hardware connectivity constraints, specific to the quantum processing unit (QPU), require that two-qubit gates can only operate on physically adjacent qubits. To satisfy these constraints, the routing stage may insert SWAP gates to reposition hardware qubits. However, since additional SWAP gates can introduce noise, carefully selecting their number and placement is crucial to maintaining the circuit's fidelity.

Finally, in the translation stage, the high-level target gates are decomposed into native gates supported by the hardware. For example, a $R_{zz}$ gate can be decomposed into two $CX$ gates and one $R_{z}$ gate, as shown in Figure~\ref{fig:example:overview}. The output of the transpilation process is a quantum circuit $C$ which includes native gates along with their execution order and positions on the hardware. In the next section, we explore metrics to measure the reliability of a quantum circuit.

\subsection{Circuit Reliability Metrics}


Reliability metrics are crucial for evaluating the performance of quantum circuits. In this section, we introduce three commonly used metrics: Measurement Success Probability (PST), Hellinger Fidelity (HF), and Process Fidelity (PF). Then, we derive the circuit fidelity based on the PF.

These metrics primarily quantify the similarity between two quantum states. Specifically, when assessing circuit performance, metrics are used to measure the similarity between the ideal outcome of the circuit, typically represented as a pure state 
$\rho_\psi = |\psi\rangle\langle \psi|$, and the actual outcome of the circuit, represented as a mixed state $\rho$. High similarity indicates better performance and robustness against errors.

PST measures the total probability of obtaining non-zero amplitude states when performing measurements in the computational 
$Z$-basis. It is formally defined as~\mbox{\cite{Aktar2024}}:

$$PST(\rho_\psi, \rho) = \sum_{x\in \{0,1\}^n, \langle x|\rho_\psi|x\rangle \neq 0 } \langle x|\rho|x\rangle$$

On the other hand, HF quantifies the similarity between the probability distribution of the ideal quantum state and the experimentally observed distribution of the actual state, based on measurements in the computational 
$Z$-basis. It is calculated as~\mbox{\cite{Aktar2024}}:

$$HF(\rho_\psi, \rho) = \bigg[\sum_{x\in \{0,1\}^n } \sqrt{\langle x|\rho_\psi|x\rangle \cdot \langle x|\rho|x\rangle}\bigg]^2$$

Unlike PST and HF which quantify the similarity based on measurements on only the computational $Z$-basis, PF evaluates the overlap between the ideal and actual quantum states across all $4^N$ measurement bases, where $N$ is the number of qubits. Thus, among three metrics, PF is able to provide the most accurate similarity between two quantum states. The PF is calculated as:

$$
PF(\rho_\psi, \rho) = \left[\text{Tr} \left( \sqrt{\sqrt{\rho} \rho_\psi \sqrt{\rho}} \right) \right]^2 = \langle \psi|\rho|\psi \rangle
$$

For simplicity and consistency, from here, we will use the term "fidelity" to specifically refer to Process Fidelity (PF) throughout this paper. Using the fidelity between two quantum states, we can define the fidelity of a quantum circuit $C$ which is represented as a sequence of operations. In ideal conditions, this sequence can be expressed as:

$$U_C =U_1U_2\dots U_L$$

where $U_i$ indicates the operation applied at the layer i-\emph{th}. In each layer, multiple gates can be applied simultaneously on different qubits, representing parallel processing. Thus, each $U_i$ can be expressed as a tensor product of $l_i$ native gates that are executed in parallel within the layer i-\emph{th}. Specifically, we have 
$$U_i = U_{i,1} \otimes U_{i,2} \otimes \dots \otimes U_{i,l_i}$$
where each $U_{i,j}$ is a native gate acting on a specific qubit (1-qubit gate) or pair of qubits (2-qubit gate). 
We denote the quantum channel which represents $U_C$ as $\mathcal{U}_C$.

In a noisy environment specified by a noise model $\mathcal{E}$, each native gate $U_{i,j}$ is replaced by a quantum channel $\Bar{\mathcal{U}}_{i,j}$ that represents its imperfect execution under the noise model. Specifically, we have:
$$\bar{\mathcal{U}}_{i,j}(\rho) = \mathcal{E}(U_{i,j}|\rho\rangle \langle \rho|U_{i,j}^\dagger)$$

As a result, the noisy channel for each layer is then given by
$$\Bar{\mathcal{U}}_i = \Bar{\mathcal{U}}_{i,1} \otimes \Bar{\mathcal{U}}_{i,2} \otimes \dots \otimes \Bar{\mathcal{U}}_{i,l_i}$$
The noise channel representing the circuit $C$ under a noise model $\mathcal{E}$ can be represented as
$$\Bar{\mathcal{U}}_C = \Bar{\mathcal{U}}_1 \circ \Bar{\mathcal{U}}_2 \circ \dots \circ \Bar{\mathcal{U}}_L$$

The fidelity of the circuit $C$ under the noise model $\mathcal{E}$, denoted as $F_\mathcal{E}(C)$, is determined based on the fidelity of all pure states passing through the channels $\mathcal{U}_C$ and $\Bar{\mathcal{U}}_C$. Specifically, it is given by:

$$F_\mathcal{E}(C) = \int d\psi PF (U_C|\psi\rangle \langle \psi|U_C^\dagger, \Bar{\mathcal{U}}_C(|\psi\rangle \langle \psi|))$$

This notation, $F_\mathcal{E}(C)$, will be used throughout the paper to represent the fidelity of a given circuit.

\section{Problem definition}
\label{sec:problem_definition}



In this section, we present the problem of optimizing the routing stage in the transpilation process to maximize the fidelity of the resulting circuit. While keeping the layout and translation strategies fixed, we focus on developing a routing method specifically aimed at maximizing the circuit fidelity. To independently represent the outcome of the routing stage, we introduce the concept of \emph{gate sequence}. We also outline the constraints that define valid solutions within this stage. We then demonstrate the completeness of this concept by showing how a gate sequence transforms into a unique circuit—the final output of the transpilation process—and proving that any valid circuit can be derived from a gate sequence. This foundation is essential for clearly defining the problem and effectively analyzing our proposed framework later. Finally, we formally define the problem addressed in this paper.

\subsection{Concepts and Constraints in the Routing Stage}

In this section, we first introduce the essential concepts related to the routing stage, which are categorized into logical concepts from the quantum algorithm and hardware concepts from the quantum hardware. Next, we propose a representation for the routing stage solution, referred to as the gate sequence. Finally, we define the constraints that the routing stage solution must satisfy.

We start with logical concepts from the quantum algorithm. In a quantum algorithm, the set of logical qubits is defined as $\mathcal{Q}^L = \{q^L_1, \dots, q^L_n\}$. The set of all possible gate types used in the algorithm is defined as $\mathcal{T}$. For example, in the QAOA algorithm~\cite{Farhi2014}, we have $\mathcal{T} = \{R_{ZZ}, R_{X}\}$. A gate type $U \in \mathcal{T}$ applied on two qubits $q^L_u, q^L_v \in \mathcal{Q}^L$ is represented as a gate $s = (U, q^L_u, q^L_v)$. The quantum algorithm specifies a set of target gates to be executed on the quantum hardware, denoted as $S_{target}= \{ s = (U, q^L_u, q^L_v)|U \in \mathcal{T}, q^L_u, q^L_v \in \mathcal{Q}^L\}$. These gates must be executed in a specific order determined by their dependencies, which are captured in a directed dependency graph $G_D = (S_{target}, E_D)$. An illustration of a dependency graph is shown in Figure~\ref{fig:example:overview}.


Next, we introduce hardware concepts from the quantum hardware. The set of physical qubits in the hardware is denoted as $\mathcal{Q}^H = \{q^H_1, \dots, q^H_n\}$ and the physical connections between pairs of hardware qubits are represented by the set $E_H = \{(q^H_u, q^H_v)| q^H_u, q^H_v \in \mathcal{Q}^H\}$. Together, these define the hardware graph $G_H = (\mathcal{Q}^H, E_H)$. After the layout stage, a one-to-one mapping is established between logical qubits and physical qubits, represented by the function $\phi: \mathcal{Q}^L \rightarrow \mathcal{Q}^H$. This mapping is referred to as the \emph{assignment}. During the routing stage, the assignment can be modified by introducing SWAP gates. The set of all possible SWAP gates is denoted as $S_w = \{(SWAP, q^L_u, q^L_v)| q^L_u, q^L_v \in \mathcal{Q}^L\}$. Specifically, given the current assignment $\phi$, if there is a SWAP gate acting on two qubits $q^L_u$ and $q^L_v$, then the new assignment $\phi'$ is constructed as $\phi'(q^L_u) = \phi(q^L_v)$, $\phi'(q^L_v) = \phi(q^L_u)$, and $\phi'(q^L_t) = \phi(q^L_t)$ for all $t \neq u, v$.



To enable a formal definition of the problem in our study, we introduce a representation for the routing stage solution, referred to as the gate sequence. Given a gate limit $T$, the solution of the routing stage can be represented as a gate sequence $\mathbf{s} = [s_1, s_2, \dots, s_{T}]$, where each $s_i \in S_{target} \cup S_w$. Specifically, the gate sequence defines the order of gate execution on the hardware, where $s_i$ is not executed after $s_{i+j}$ for any $j>0$. We define the subsequence $\mathbf{s}^{(i)} = [s_1, s_2, \dots, s_i] \subseteq \mathbf{s}$ as the sequence of the first $i$ gates. Since the assignment may change during gate execution, we denote the assignment after executing the $i$-\emph{th} gate as $\phi_i$ with the initial assignment $\phi_0$ given as the output of the layout stage. 


A valid solution of the routing stage (i.e., valid gate sequence) must satisfy three constraints. The first is the \emph{inclusion constraint}, which requires that all target gates in $S_{target}$ must be included in the sequence $\mathbf{s}$. In other words, it must hold that $s \in \mathbf{s} \forall s \in S_{target}$. The second constraint is the \emph{dependency constraint}, which ensures that 
$\mathbf{s}$ respects the dependencies in the directed dependency graph $G_D$. Specifically, for any two gates $s_i, s_{i+j} \in \mathbf{s}$ with $j > 0$, there is no path from $s_{i+j}$ to $s_i$ in the dependency graph $G_D$. Lastly, the \emph{connectivity constraint} requires that for any gate $s_i = (U, q^L_u, q^L_v)$, the physical qubits corresponding to $q^L_u$ and $q^L_v$ in the assignment $\phi_{i-1}$ must be directly connected, i.e., $(\phi_{i-1}(q^L_u), \phi_{i-1}(q^L_v)) \in E_H$.

\subsection{Transformation between Gate sequences and Circuits}

\begin{figure*}[t]
  \centering
  \includegraphics[width=1\textwidth]{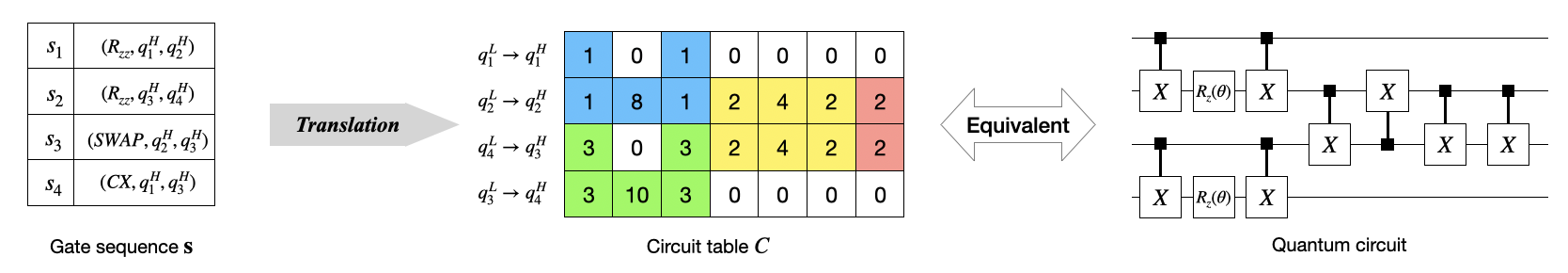}
  \label{fig:translation}
\caption{An example of how Algorithm~\ref{alg:translation} translates a gate sequence into a circuit table. In this process, gates in the sequence are decomposed into native gates and sequentially added to the table. Specifically, the $R_{zz}$ gate is decomposed into two $C_X$ gates and one $R_z$ gate, while the $SWAP$ gate is decomposed into three $C_X$ gates. The colored cells in the circuit table represent the identities of the native gates, corresponding to the gate of the same color in the gate sequence. In addition, each circuit table uniquely corresponds to a single circuit, and vice versa. Therefore, the two terms can be used interchangeably.}
\label{fig:example:translation}
\end{figure*}

We begin by introducing an alternative representation for the circuit, referred to as the \emph{circuit table}, which offers a more convenient format for explaining our proposed learning-based framework later. We then introduce an algorithm that translates a gate sequence into its corresponding circuit table. This algorithm is crucial in bridging the gate sequence to the resulting circuit, whose fidelity serves as the measure of the gate sequence's quality.


The circuit provides a visual representation of the sequence of operations (gates) applied to qubits over time. While this format is intuitive for visualization, it is not convenient for the mathematical computations. To address this, a circuit can be alternatively represented as a circuit table $C$ with $M$ columns and $N$ rows where $M$ is the depth limit and $N$ is the qubit limit (see Figure~\ref{fig:example:translation}). In this representation, the entry $C[i][j]$ specifies the identity of the native gate applied to the qubit i-\emph{th} at the layer j-\emph{th}. The row i-\emph{th}, denoted as $C[i][:]$, captures the sequence of native gates applied to the qubit i-\emph{th} throughout the circuit, while the column j-\emph{th}, denoted as $C[:][j]$, represents native gates executed in parallel in the layer j-\emph{th}. To ensure clarity and avoid ambiguity, native gates of the same type acting on different qubits or qubit pairs are assigned distinct identities. For example, CX gates acting on qubit pairs $(1,2)$ and $(3,4)$ are assigned two different identities. This distinction allows for a unique reconstruction of a quantum circuit from a circuit table. Thus, in this work, we use the notion of circuit and circuit table interchangeably and denote both notions as $C$.

Next, we introduce the algorithm for translating a gate sequence into a circuit table. Given a valid sequence $\mathbf{s} = \{s_1, \dots, s_{T_s}\}$ with $s_i \in S_{target} \cup S_w$ and the initial assignment $\phi_0$, the objective is to place all gates in $\mathbf{s}$ into a circuit table $C$. The translation from $\mathbf{s}$ to $C$ is outlined in Algorithm~\ref{alg:translation}. Specifically, the algorithm takes the valid sequence $\mathbf{s} = \{s_1, \dots, s_m\}$, the initial assignment $\phi_0$, and the dependency graph $G_D$ as inputs. The output is the constructed circuit table $C$. The algorithm operates under the assumption that the strategies for the layout and translation stages are predetermined. Specifically, the initial assignment $\phi_0$ (from the layout stage) and the decomposition of gates in $\mathbf{s}$ (from the translation stage) are fixed.

The details of Algorithm~\ref{alg:translation} are described as follows. First, an empty table $C$, an array \texttt{qubit\_layer}, and an array \texttt{gate\_layer} are initialized with all entries set to $0$ (lines 1-3). For each qubit $i$, \texttt{qubit\_layer[i]} tracks the last layer where a gate was applied to qubit $i$. On the other hand, for each gate $s \in \mathbf{s}$, \texttt{gate\_layer} records the last layer where the gate $s$ was implemented. These arrays are updated as we place gates into $C$. Next, we iterate over the gates in $\mathbf{s}$ in sequential order. At step $i$, the triple $(U, q^L_u, q^L_v)$ is extracted from the gate $s_i$ (line 5). Note that $U$ may not be a native gate type, so $s_i$ needs to be decomposed into a set of native gates $\{\Bar{s}_1, \dots, \Bar{s}_{m_i}\}$ (line 6). These native gates will be placed in consecutive layers. Then, we must determine the appropriate layer, denoted as \texttt{chosen\_layer}, for placing the set of $m_i$ native gates. First, we extract the corresponding physical qubits $q^H_u$ and $q^H_v$ based on the assignment $\phi_{i-1}$ (lines 7-8). The \texttt{chosen\_layer} is assigned as the closest available layer for both qubits $q^H_u$ and $q^H_v$ (line 9). Additionally, \texttt{chosen\_layer} must follow the last layers of prerequisite gates of $s_i$ in the dependency graph $G_H$ (lines 10-12). If $s_i$ is a SWAP gate, \texttt{chosen\_layer} must be the latest available layer to ensure that the SWAP gate is executed after all previous gates (lines 13-14). Once the appropriate layer is found, the native gates are placed into the circuit table $C$ at the \texttt{chosen\_layer} (lines 15-17). The arrays \texttt{qubit\_layer}, \texttt{gate\_layer} and the assignment $\phi_i$ are also updated accordingly (lines 18-24). The return circuit $C$ (line 25) corresponding to the input $\mathbf{s}$ is unique. Thus, we denote the mapping from gate sequence to circuit based on Algorithm~\ref{alg:translation} as $\Psi$. In this work, we assume that the initial assignment $\phi_0$ is fixed. Thus, given a gate sequence $\mathbf{s}$ and a dependency graph $G_D$, the function $\Psi$ produces a unique circuit $C$. We can formally write $C = \Psi(\mathbf{s}, G_D)$. The quality of a gate sequence $s$ can therefore be directly assessed by evaluating the fidelity of this resulting circuit, which connects our notations as $F_{\mathcal{E}}(\Psi(\mathbf{s}, G_D))$.

\begin{algorithm}[ht]
\DontPrintSemicolon
  \KwInput{The gate sequence $\mathbf{s} = \{s_1, \dots, s_{T_s}\}$, the initial assignment $\phi_0$, the dependency graph $G_D$}
  \KwOutput{The circuit table $C$}
  
  Initialize an empty circuit table $C$ with $N$ rows and $M$ columns.

  Initialize an array $\texttt{qubit\_layer[i]} = 0 \forall i \in [1,N]$.

  Initialize an array $\texttt{gate\_layer[i]} = 0 \forall i \in [1,m]$.

  \For{$i:= 1 \to T_s$}{
    Extract $(U, q^L_u, q^L_v)$ from $s_i$.

    Decompose $s_i$ into the set of $m_i$ native gates $\{\Bar{s}_1,\dots, \Bar{s}_{m_i}\}$

    $q^H_u \leftarrow \phi_{i-1}(q^L_u)$

    $q^H_v \leftarrow \phi_{i-1}(q^L_v)$

    $\texttt{chosen\_layer} \leftarrow \max(\texttt{qubit\_layer}[q^H_u], \texttt{qubit\_layer}[q^H_v]) + 1$

    \For{$j:= 1 \to i-1$}{
        \If{ $\exists$ a path from $s_j$ to $s_i$ in $G_D$} {
            $\texttt{chosen\_layer} \leftarrow \max(\texttt{chosen\_layer}, \texttt{gate\_layer[j]} + 1)$
        }

        \If{ $U \equiv SWAP$} {
            $\texttt{chosen\_layer} \leftarrow \max(\texttt{chosen\_layer}, \texttt{gate\_layer[j]} + 1)$
        }
    }
    \For{$j:= 0 \to m_i-1$}{
        $C[\texttt{chosen\_layer}+j][q^H_u] \leftarrow \Bar{s}_j$

        $C[\texttt{chosen\_layer}+j][q^H_v] \leftarrow \Bar{s}_j$
    }
    
    $\texttt{gate\_layer[i]} \leftarrow \texttt{chosen\_layer}+m_i-1$

    $\texttt{qubit\_layer}[q^H_u] \leftarrow \texttt{chosen\_layer}+m_i-1$

    $\texttt{qubit\_layer}[q^H_v] \leftarrow \texttt{chosen\_layer}+m_i-1$

    $\phi_i \equiv \phi_{i-1}$
    
    \If{$U \equiv SWAP$} {
        $\phi_i(q_u^L) \leftarrow \phi_{i-1}((q_v^L))$
        
        $\phi_i(q_v^L) \leftarrow \phi_{i-1}((q_u^L))$
    }
  }
  \Return $C$
\caption{Translation}
\label{alg:translation}
\end{algorithm}

\subsection{Fidelity Maximization in Routing Stage Problem}

We are now ready to formally define the Fidelity Maximization in Routing Stage (FMRS) problem as follows: 

\begin{definition}
    (FMRS problem) Given a set of target gates $S_{target}$, a set of SWAP gates $S_w$, and a dependency graph $G_D$, a hardware graph $G_H$, a gate limit $T$ and a noise model $\mathcal{E}$, the fidelity maximization problem aims to construct a valid sequence $\mathbf{s} = \{s_1, \dots, s_{T_s}\}$ where $s_i \in S_{target} \cup S_w, \forall s_i\in \mathbf{s}$ and $s \in \mathbf{s}, \forall s \in S_{target}$ such that the fidelity $F_{\mathcal{E}}(\Psi(\mathbf{s}, G_D))$ is maximized and $T_s \leq T$.
    \label{def:fidelity_maximization}
\end{definition}

As can be seen, FMRS aims to find a valid sequence $\mathbf{s}$ that satisfies the constraints of the routing stage, contains no more than $T$ gates and maximizes the fidelity of the translated circuit under a noise model $\mathcal{E}$. This problem definition establishes a clear goal and provides a foundation for analyzing routing techniques with the corresponding goal.

\section{Technical method}
\label{sec:proposedapproach}
This section presents our proposed framework, \NAMEA, as two key modules. First, we introduce the GP-based surrogate model which estimates the fidelity of given circuits. Then, we propose an 
RL model designed to optimize the gate sequence in order to maximize the resulting circuit fidelity, utilizing the fidelity estimation of the surrogate model as reward signals. 

\subsection{Surrogate Model for Estimating the Circuit Fidelity}

\begin{figure*}[t]
  \centering
  \includegraphics[width=1\textwidth]{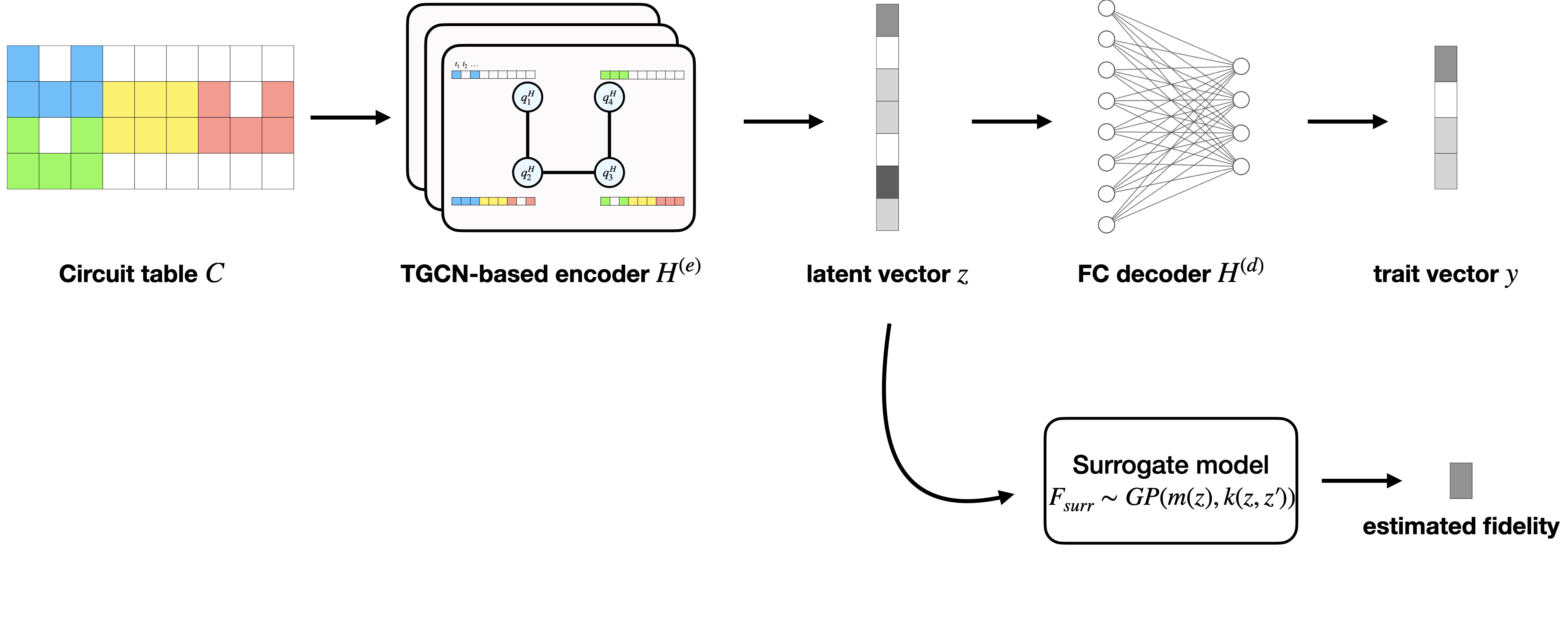}
  \caption{The flow for training the proposed surrogate model ....}
  \label{fig:surrogate_model}
\end{figure*}

In this part, we introduce an efficient technique to estimate the fidelity of quantum circuits by 1) embedding them into a continuous space, and 2) building a surrogate model based on Gaussian Process in this space. The estimated fidelity then is used to provide reward signals for training our proposed RL model.

\subsubsection{Circuit Embedding using TGCN-based Autoencoder}
To facilitate smooth and efficient learning of the relationship between the circuit and its corresponding fidelity, it is essential to embed a discrete circuit into a continuous latent space. The architecture of the TGCN-based autoencoder employed for embedding circuits into the latent space is illustrated in Figure~\ref{fig:surrogate_model}.

Recall that a circuit can be represented as a table $C$ with $M$ columns corresponding to the depth limit and $N$ rows corresponding to the qubit limit (i.e., $N = |\mathcal{Q}^L| = |\mathcal{Q}^H|$ and $M = T$). Each entry in $C$ represents a gate applied in a specific qubit in a specific layer. At each time step, all gates in the same layer are placed and executed simultaneously on the hardware before proceeding to the next layer, implying temporal dependency between the columns of $C$. In addition, the $N$ qubits have spatial dependency, represented by the hardware graph $G_H$. Therefore, to fully capture the dynamics of the system, it is necessary to adopt a learning structure that can account for both the temporal dependency represented by $C[:][1], \dots, C[:][M]$ and the spatial dependency presented by $G_H$.

By embedding circuits into a continuous latent space, our goal is to ensure that circuits with similar fidelity are mapped to nearby points in this space. This enables smoother interpolation between circuits, facilitating the application of Gaussian Process to construct a surrogate model for fidelity estimation. However, existing methods for calculating exact fidelity, such as direct fidelity estimation, are computationally expensive, leading to an insufficient number of data points for effective training. To address this problem, we identify a set of alternative fidelity-related traits that are strongly correlated with fidelity but significantly easier to compute. Circuits with similar traits are mapped to proximate points in the latent space. The list of traits can be denoted as a vector $\mathbf{y} = [y_1, \dots, y_{d_y}]$. In this work, the list of traits include the circuit depth, and the gate count for each type of gate in total. Additional hardware-specific noise features, such as qubit-specific $T_1$ and $T_2$ times, gate error rates, readout errors, and calibration drift, can be readily incorporated into our framework. To include such features, we can simply augment the list of fidelity-related traits with these hardware-specific parameters and retrain the model. Our framework is flexible and agnostic to the feature set, so it can seamlessly integrate richer noise profiles as long as they are available.

The goal of the autoencoder is to learn two key functions: the encoder $H^{(e)}$ and the decoder $H^{(d)}$. The encoder $H^{(e)}$ maps the sequence of $M$ feature vectors $C[:][t]$ with $t \in [1,M]$ and the hardware graph $G_H$ into an embedding vector $z \in \mathbb{R}^{d_z}$. The decoder $H^{(d)}$ aims to reconstruct the fidelity-related vector $\mathbf{y}$ from the embedding vector $z$. The overall process can be summarized as:

$$(C[:][1], \dots, C[:][M];G_H) \xrightarrow{\text{$H^{(e)}(.)$}} z \xrightarrow{\text{$H^{(d)}(.)$}} \mathbf{y}$$

For learning the encoder $H^{(e)}$, we use the TGCN model~\cite{Yan2018,Zhao2020} which is efficient in capturing both spatial and temporal dependency of the input. The TGCN based model is a combination of the GCN model $H^{(gcn)}$, and recurrent gated units. Specifically, 2-layer GCN model is used to capture the spatial dependency of qubit connectivity. Given the adjacency matrix of $G_H$ as $A_H$, we have the adjacency matrix with added self-connections $\hat{A}_H = A_H+I$, the diagonal matrix of degrees $\hat{D}$ where $\hat{D}_{ii} = \sum_{j} \hat{X}_{H ij}$ and $\tilde{A}_H = \hat{D}^{\frac{-1}{2}}\hat{A}_H\hat{D}^{\frac{-1}{2}}$. Given the feature matrix at step $t$ as $C[:][t]$, the two layer GCN model can be given as:
\begin{align}
    H^{(gcn)}_{\varphi_{gcn}} (C[:][t], G_H) = \sigma(\tilde{A}_HReLU(\tilde{A}_HC[:][t]W_{gcn,0})W_{gcn,1})
    \label{eq:gcn}
\end{align}

where $\varphi_{gcn} = [W_{gcn,0}, W_{gcn,1}]$ are the trainable weights of two layers, and $\sigma$ and $ReLU$ are activation functions. Based on the GCN model $H^{(gcn)}$, the forwarding computation of TGCN based model can be illustrated as follows:

$$e_{t,1} = \sigma (W_{t,1}[H^{(gcn)}_{\varphi_{gcn}} (C[:][t], G_H), h_{t-1}]+b_{t,1})$$

$$e_{t,2} = \sigma (W_{t,2}[H^{(gcn)}_{\varphi_{gcn}} (C[:][t], G_H), h_{t-1}]+b_{t,2})$$

$$e_{t,3} = \tanh (W_{t,3}[H^{(gcn)}_{\varphi_{gcn}} (C[:][t], G_H), r_{t,2}\times h_{t-1}]+b_{t,3})$$

$$h_{t} = e_{t,1} \times h_{t-1}+(1-e_{t,1}) \times e_{t,3}$$

$$\forall t = 1,\dots, M$$
where $h_t$ is the output at time $t$, $e_{t,1}$, $e_{t,1}$ and $e_{t,1}$ are recurrent gates at time $t$, and $W_{t,i}$, and $b_{t,i}$ with $t \in [1,M], i\in [1,2,3]$ are corresponding trainable weights and bias. The output of the $TGCN$ model, denoted as $H^{(tgcn)}_{\varphi_{tgcn}}$, corresponds to the final layer's output. This can be expressed as:
\begin{align}
    H^{(tgcn)}_{\varphi_{tgcn}}(C, G_H) = h_M
    \label{eq:tgcn}
\end{align}
where $\varphi_{tgcn} = \varphi_{gcn} \cup [W_{t,i}$, $b_{t,i}]_{t \in [1,M], i\in [1,2,3]}$ are trainable weights of the TGCN model. To facilitate further computations, we flatten the output of $H^{(tgcn)}_{\varphi_{tgcn}}$. Then, the latent vector is obtained as follows:

$$z = H^{(e)}_{\varphi_e}(C, G_H) = ReLU(W_eH^{(tgcn)}_{\varphi_{tgcn}} (C, G_H) + b_e)$$
where $\varphi_e = \varphi_{tgcn} \cup [W_e, b_e]$ are trainable weights. On the other hand, the decoder is in form of a 2-layer fully connected network. Specifically, we have:

$$H^{(d)}_{\varphi_d} (z) = W_{d,1}ReLU(W_{d,2}z + b_{d,2})+b_{d,1} = \Bar{\mathbf{y}}$$

where $\varphi_d = [W_{d,1}, b_{d,1}, W_{d,2}, b_{d,2}]$ are trainable weights of the 2-layer fully connected network, and $\Bar{\mathbf{y}}$ is the estimated output vector.

Given a training set of $n_{embed}$ instances $\{C_i, \mathbf{y}_i\}$ with a same hardware graph $G_H$, the loss function for the autoencoder model is:
$$\mathcal{L}_{embed}(\varphi) = \sum_{i=1}^{n_{embed}}||H^{(d)}_{\varphi_d} (z)(H^{(e)}_{\varphi_e}(C_i, G_H)) - \mathbf{y}_i||^2_2$$
where $\varphi = \varphi_e \cup \varphi_d$. Then, given the learning rate as $\alpha_{embed}$, the trainable weights are updated by the stochastic gradient descent as follows:
$$\varphi \leftarrow \varphi - \alpha_{embed} \nabla_\varphi \mathcal{L}_{embed}(\varphi)$$

\subsubsection{Gaussian Process for the Surrogate Model}
After embedding circuits into the latent space $\mathbb{R}^{d_z}$, the task of estimating fidelity becomes a regression problem within this continuous space. Gaussian Process (GP) regression is particularly well-suited for this task because it can achieve high predictive accuracy without requiring a large number of exact fidelity measurements, which are computationally expensive. In addition, GP provides not only point estimates but also a full posterior distribution over the predictions. This allows us to gain insights into the "shape" of the fidelity estimation function under different noise models.

The training process for the GP model is detailed as follows. Starting with a set of sampled circuits $\{C_i\}_{i=1}^{n_s}$, we compute their corresponding latent vectors $\mathbf{z} = \{z_1, \dots, z_{n_s}\}$ where $z_i = H^{(e)}_{\varphi_e} (C_i, G_H)$. Under a noise model $\mathcal{E}$, the fidelity of each circuit $C_i$ can be calculated as $u_i = F_\mathcal{E}(C_i) + \epsilon$ (i.e., we utilize the direct fidelity estimation in~\cite{Flammia_2011} to calculate the exact fidelity) with $\epsilon \in \mathcal{N}(0, \gamma^2)$. We train the GP model $\tilde{F}_{\mathcal{E}}$ using the training set $\{(z_i, u_i)\}_{i=1}^{n_s}$ to predict the fidelity of unseen circuits. The GP model $\tilde{F}_{\mathcal{E}}$ is defined as 
$$\tilde{F}_{\mathcal{E}}(z) \sim GP(m(z), k(z, z'))$$
where $m(z)$ is the mean function (assumed to be 0), and $k(z, z')$ is the kernel function that captures the covariance between points in the latent space. 

GP assumes that the outputs are drawn from a multivariate Gaussian distribution. With the training outputs as $\mathbf{u} = \{u_1, \dots, u_{n_s}\}$, we have the prior distribution on the outputs as:
$$\mathbf{u}|\mathbf{z} \sim \mathcal{N}(\mathbf{u}|0, \mathbf{K} + \gamma^2 \mathbf{I})$$
where $\mathbf{K}_{ij} = k(z_i, z_j)$.

Now, we describe how the GP model makes the estimation on an unseen input. Given a new circuit with latent representation $z_*$, GP aims to find the posterior distribution of the corresponding fidelity $u_*$. The joint distribution of $\mathbf{u}$ and $u_*$ can be described as:
$$\begin{bmatrix}
\mathbf{u} \\
u_*
\end{bmatrix} = \mathcal{N}(0, \begin{bmatrix}
 \mathbf{K} + \gamma^2 \mathbf{I} &  \mathbf{K}_*\\
 \mathbf{K}_*^T &  \mathbf{K}_{**}
\end{bmatrix})$$

where $\mathbf{K}_{* i} = k(z_i, z_*)$ and $\mathbf{K}_{**} = k(z_*, z_*)$. From this, we derive the posterior distribution for $u_*$ as follows:

$$u_*|\mathbf{z}, \mathbf{u}, z_* \sim \mathcal{N} (u_*|m_*, \sigma_*^2)$$
where $m_* = \mathbf{K}_*^T[\mathbf{K}+ \gamma^2 \mathbf{I}]^{-1} \mathbf{u}$ represents the mean of the predicted fidelity and $\sigma_*^2 = \mathbf{K}_{**} -\mathbf{K}_*^T[\mathbf{K}+ \gamma^2 \mathbf{I}]^{-1} \mathbf{K}_*$ quantifies the uncertainty associated with the prediction.

The GP surrogate model is designed to be robust against errors in the limited estimates used for training. Robustness comes from its foundational modeling assumptions. When training the surrogate model, the fidelity estimates, $u_i$, are not treated as perfect ground truth. Instead, they are modeled as the sum of the true fidelity, $F_E(C_i)$, and a noise term, $\epsilon$, which is assumed to follow a normal distribution, $\mathcal{N}(0, \gamma^2)$. By explicitly incorporating this error term into the regression, the GP model learns to account for statistical noise present in the training data. Thus, this allows the model to avoid overfitting to inaccurate fidelity estimates and to produce more reliable predictions with associated confidence intervals.

The accuracy of GP mainly depends on the selection of the kernel function $k(z, z')$ which captures the underlying patterns in the data. An appropriately chosen kernel not only enhances predictive performance but also provides valuable insights into the "shape" of the fidelity estimation function under different quantum noise models. A comprehensive analysis of kernel selection and its impact is presented in the experimental section.

\subsubsection{Training Data Selection}
The high computational cost of the direct fidelity estimation in generating exact fidelity samples imposes a strict limit on the number of training data points. Thus, it is essential to carefully select a set of training points that are highly informative in training the model. Here, we propose an algorithm for selecting $n_s$ training points (i.e., latent vectors) that maximize the informativeness of the training set.

Based on the work~\cite{Srinivas2010}, the informativeness of a input set $\mathbf{z} = \{z_1, \dots, z_{n_s}\}$ in training the GP model can be quantified as:
$$Inf(\mathbf{z}) = \frac{1}{2} log |I + \gamma^{-2}\mathbf{K}|$$ where $\mathbf{K}_{ij} = k(z_i, z_j)$. In this work, given the family of circuit $\mathcal{C}$ for which we aim to estimate fidelity, we aim to select a set of $n_s$ latent vectors from the set $\mathcal{Z} = \{H_{\phi_e}^{e} (C, G_H)|C \in \mathcal{C}\}$ such that the informativeness of the selected set is maximized. The selection process is outlined in Algorithm~\ref{alg:selection}.

In details, we begin by initializing the set $\mathbf{z}$ as empty. Let $n$ denote the size of the latent vector set $\mathcal{Z}$ (i.e., $n = |\mathcal{Z}|$), which can grow exponentially with the number of qubits and the circuit depth for the family $\mathcal{C}$. Given that it is computationally infeasible to evaluate all $n$ latent vectors in polynomial time, we randomly sample a subset $\mathcal{Z}_s \subseteq \mathcal{Z}$, with the sampling rate controlled by $\epsilon_s$. From this sampled subset $\mathcal{Z}_s$, we select the candidate that provides the highest information gain and append it to $\mathbf{z}$.

The function $Inf$ is proved as a monotone submodular function~\cite{Srinivas2010}. Thus, we can bound the informativeness of solutions returned by Algorithm~\ref{alg:selection} in Theorem~\ref{theorem:submodular}.

\begin{theorem}
Algorithm~\ref{alg:selection} achieves an approximation ratio of $(1-1/e-\epsilon_s)$
in expectation of the optimal informativeness.
\label{theorem:submodular}
\end{theorem}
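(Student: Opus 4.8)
The plan is to recognize Algorithm~\ref{alg:selection} as an instance of the \emph{stochastic (sampled) greedy} procedure for maximizing a monotone submodular function under a cardinality constraint of $n_s$, and then to run the classical $(1-1/e)$ argument of Nemhauser--Wolsey--Fisher while carefully tracking the loss introduced by restricting each greedy step to the random pool $\mathcal{Z}_s$. The excerpt already supplies, via~\cite{Srinivas2010}, that $Inf$ is monotone and submodular, so the only genuinely new ingredient is a per-iteration lemma showing that subsampling costs at most a factor $(1-\epsilon_s)$ in the progress made at each step.

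Fix an optimal solution $\mathbf{z}^{\star}$ with $|\mathbf{z}^{\star}| = n_s$ and $Inf(\mathbf{z}^{\star}) = \mathrm{OPT}$, and let $\mathbf{z}^{(i)}$ be the set held after $i$ iterations, with $\mathbf{z}^{(0)} = \emptyset$. The core claim, writing $A = \mathbf{z}^{(i-1)}$, is
\[
\mathbb{E}\!\left[\, Inf(\mathbf{z}^{(i)}) - Inf(A) \,\middle|\, A \,\right] \;\ge\; \frac{1-\epsilon_s}{n_s}\bigl(\mathrm{OPT} - Inf(A)\bigr).
\]
To prove it, let $R = \mathcal{Z}_s$ be the pool sampled in iteration $i$ and $m = |\mathbf{z}^{\star}\setminus A| \le n_s$ (if $m = 0$ the right side is $\le 0$ and there is nothing to show). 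Since $Inf$ is monotone, all marginal gains are nonnegative, so the element $v$ actually selected has gain at least $\max_{u \in R\cap(\mathbf{z}^{\star}\setminus A)}\bigl(Inf(A\cup\{u\}) - Inf(A)\bigr)$. Conditioning on whether $R$ meets $\mathbf{z}^{\star}\setminus A$, a short rearrangement argument (Chebyshev's sum inequality, pairing the decreasing list of marginal gains of the elements of $\mathbf{z}^{\star}\setminus A$ against the decreasing probabilities that a given one is the first such element to land in $R$) shows the expected gain is at least $\bigl(1 - \Pr[R\cap(\mathbf{z}^{\star}\setminus A) = \emptyset]\bigr)$ times the \emph{average} marginal gain $\frac{1}{m}\sum_{o\in\mathbf{z}^{\star}\setminus A}\bigl(Inf(A\cup\{o\}) - Inf(A)\bigr)$; submodularity then bounds that sum below by $Inf(A\cup\mathbf{z}^{\star}) - Inf(A) \ge \mathrm{OPT} - Inf(A)$. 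Finally, the subsample size is calibrated (this is what ``sampling rate controlled by $\epsilon_s$'' encodes) so that $\Pr[R\cap(\mathbf{z}^{\star}\setminus A) = \emptyset] \le \epsilon_s^{\,m/n_s}$, and the concavity inequality $1 - \epsilon_s^{\,t} \ge t(1-\epsilon_s)$ on $[0,1]$ (with $t = m/n_s$) converts $\frac{1-\epsilon_s^{\,m/n_s}}{m}$ into $\frac{1-\epsilon_s}{n_s}$, which finishes the core claim.

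The rest is the standard telescoping. Setting $\delta_i = \mathrm{OPT} - \mathbb{E}[Inf(\mathbf{z}^{(i)})]$ and taking expectations in the core claim via the tower property gives $\delta_i \le \bigl(1 - \tfrac{1-\epsilon_s}{n_s}\bigr)\delta_{i-1}$, hence $\delta_{n_s} \le \bigl(1 - \tfrac{1-\epsilon_s}{n_s}\bigr)^{n_s}\mathrm{OPT} \le e^{-(1-\epsilon_s)}\mathrm{OPT}$. A one-line convexity estimate ($e^{x} - 1 \le (e-1)x$ for $x\in[0,1]$, applied with $x = \epsilon_s$) gives $e^{-(1-\epsilon_s)} = e^{-1}e^{\epsilon_s} \le e^{-1} + \epsilon_s$, so $\mathbb{E}[Inf(\mathbf{z}^{(n_s)})] \ge (1 - 1/e - \epsilon_s)\,\mathrm{OPT}$, which is exactly the claimed approximation ratio in expectation.

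I expect the main obstacle to be the probabilistic half of the core claim: making precise that the sampling rate $\epsilon_s$ delivers the miss-probability bound $\epsilon_s^{\,m/n_s}$ uniformly across all iterations, correctly accounting for the fact that the residual target set $\mathbf{z}^{\star}\setminus A$ shrinks as the run proceeds (which is why the rearrangement/concavity step, rather than a crude union bound, is needed to recover the clean $\tfrac{1-\epsilon_s}{n_s}$ rate), and carrying the conditioning cleanly through the whole execution — in particular that the pool $\mathcal{Z}_s$ drawn at step $i$ is independent of the history given $\mathbf{z}^{(i-1)}$, so that the tower property applies. Once that lemma is established, the remaining telescoping and the final arithmetic are routine.
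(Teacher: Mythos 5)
Your proposal is correct and follows essentially the same route as the paper's proof: both instantiate the stochastic-greedy argument, bounding the miss probability of the sampled pool by $e^{-s|\mathbf{z}^*\setminus\mathbf{z}|/n}=\epsilon_s^{\,m/n_s}$, converting it via the same concavity inequality into a per-step expected gain of at least $\frac{1-\epsilon_s}{n_s}\bigl(Inf(\mathbf{z}^*)-Inf(\mathbf{z}_i)\bigr)$ using submodularity, and then iterating to obtain $(1-(1-\frac{1-\epsilon_s}{n_s})^{n_s})\ge 1-1/e-\epsilon_s$. The only differences are presentational (your rearrangement justification versus the paper's symmetry statement, and your telescoped $\delta_i$ recursion versus the paper's direct induction), so no further comparison is needed.
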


\begin{proof}
    The set $\mathcal{Z}_s$ contains $s =\frac{n}{n_s} log(\frac{1}{\epsilon_s})$ elements (with repetitions) from $\mathcal{Z} \setminus \mathbf{z}$. Given the optimal set with optimal informativeness as $\mathbf{z^*}$, we can estimate the probability that $\mathcal{Z}_s \cap (\mathbf{z^*} \setminus \mathbf{z}) = \emptyset$ as follows:
    \begin{align}
        Pr[\mathcal{Z}_s \cap (\mathbf{z^*} \setminus \mathbf{z}) = \emptyset] &= (1-\frac{|\mathbf{z^*} \setminus \mathbf{z}|}{|\mathcal{Z} \setminus \mathbf{z}|})^s \nonumber \\
         &\leq e^{-s\frac{|\mathbf{z^*} \setminus \mathbf{z}|}{|\mathcal{Z} \setminus \mathbf{z}|}} \nonumber \\
         &\leq e^{-s\frac{|\mathbf{z^*} \setminus \mathbf{z}|}{n}} \label{eq:theorem:empty}
    \end{align}

We observe that $1 - e^{-\frac{s}{n}x}$ is a concave function of $x$ with $x = |\mathbf{z^*} \setminus \mathbf{z}| \in [0,n_s]$, so we have $1 - e^{-\frac{s}{n}x} \geq (1-\frac{x}{n_s})(1- e^{-\frac{s}{n}0}) + \frac{x}{n_s}(1- e^{-\frac{s}{n}n_s})$. Combining with the result in (\ref{eq:theorem:empty}), we can imply that:

\begin{align}
        Pr[\mathcal{Z}_s \cap (\mathbf{z^*} \setminus \mathbf{z}) \neq \emptyset] &\geq 1- e^{-s\frac{|\mathbf{z^*} \setminus \mathbf{z}|}{n}} \nonumber \\
        &\geq (1-e^{-s\frac{n_s}{n}})\frac{|\mathbf{z^*} \setminus \mathbf{z}|}{n_s} \nonumber \\
        &= (1-\epsilon_s) \frac{|\mathbf{z^*} \setminus \mathbf{z}|}{n_s} \label{eq:theorem:not_empty}
\end{align}

Algorithm~\ref{alg:selection} picks an element $z \in \mathcal{Z}_s$ that maximizes the information gain $\Delta(z|\mathbf{z}) = Inf(\mathbf{z} \cup \{z\}) - Inf(\mathbf{z})$. $\mathcal{Z}_s$ has equal chance to contain each element of $\mathbf{z^*} \setminus \mathbf{z}$. Thus, the probability of randomly selecting an element in $\mathcal{Z}_s \cap (\mathbf{z^*} \setminus \mathbf{z})$ is equal to that of randomly selecting an element in $\mathbf{z^*} \setminus \mathbf{z}$. As a result, we have:
\begin{align}
    \mathbb{E}[\Delta(z|\mathbf{z})] &\geq Pr[\mathcal{Z}_s \cap (\mathbf{z^*} \setminus \mathbf{z}) \neq \emptyset] \times \frac{1}{|\mathbf{z^*} \setminus \mathbf{z}|} \sum_{z \in \mathbf{z^*} \setminus \mathbf{z}} [\Delta(z|\mathbf{z})] \nonumber \\
    &\geq \frac{1-\epsilon_s}{n_s}\sum_{z \in \mathbf{z^*} \setminus \mathbf{z}} \Delta(z|\mathbf{z}) \text{(Based on (\ref{eq:theorem:not_empty}))}\label{eq:theorem:margin_gain}
\end{align}

Let $\mathbf{z}_i = \{z_1, \dots, z_i\}$ be the solution of Algorithm~\ref{alg:selection} after $i$ steps. From (\ref{eq:theorem:margin_gain}), we have:
\begin{align*}
    \mathbb{E}[Inf(\mathbf{z}_{i+1}) - Inf(\mathbf{z}_i)] &= \mathbb{E}[\Delta(z|\mathbf{z}_i)|\mathbf{z}_i]\\ &\geq \frac{1-\epsilon_s}{n_s}\sum_{z \in \mathbf{z^*} \setminus \mathbf{z}_i} \Delta(z|\mathbf{z}_i)\\
    &\geq \frac{1-\epsilon_s}{n_s} \Delta(\mathbf{z^*}|\mathbf{z}_i) \\
    &\geq \frac{1-\epsilon_s}{n_s} [ Inf(\mathbf{z^*})-Inf(\mathbf{z}_i)] \text{(Due to the submodularity of \emph{Inf})}
\end{align*}

By induction, we have:

\begin{align*}
    \mathbb{E}[Inf(\mathbf{z}_{n_s})] \geq (1-(1-\frac{1-\epsilon_s}{n_s})^{n_s})Inf(\mathbf{z^*}) \geq (1-\frac{1}{e} - \epsilon_s)Inf(\mathbf{z^*})
\end{align*}

\end{proof}

\begin{algorithm}[ht]
\DontPrintSemicolon
  \KwInput{The informativeness function $Inf$, the training size $n_s$, the sampling rate $\epsilon_s$}
  \KwOutput{The training set $\mathbf{z}$}
  
  $\mathbf{z} \leftarrow \emptyset$

  $n \leftarrow |\mathcal{Z}|$

  \For{$i:= 1 \to n_s$}{
    $\mathcal{Z}_{s}$ $\leftarrow$ a subset obtained by sampling $\frac{n}{n_s} log(\frac{1}{\epsilon_s})$ random elements from $\mathcal{Z} \setminus \mathbf{z}$.

    $z^* \gets \arg\max_{z \in \mathcal{Z}_{s}} Inf(\mathbf{z} \cup \{z\}) - Inf(\mathbf{z})$

    $\mathbf{z} \leftarrow \mathbf{z} + \{z^*\}$
  }

  \Return $\mathbf{z}$
\caption{Selection of the training set}
\label{alg:selection}
\end{algorithm}

\subsection{Reinforcement Learning model for the FMRS problem}

In this section, we introduce an RL module designed to address the FMRS problem. We begin by formulating the FMRS problem as a Markov Decision Process (MDP) and detailing its components and workflow. Next, we describe the actor-critic algorithm employed to solve the MDP of the FMRS problem.

\subsubsection{Markov Decision Framework for the FMRS problem}

To develop the reinforcement learning (RL) module for solving the FMRS problem, we first model it as a Markov Decision Process (MDP):

$$\mathcal{M}_{FMRS} = (\mathbf{O}, \mathbf{A},  \pi, r)$$

\begin{itemize}
    \item $\mathbf{O}$: The state $o_t \in \mathbf{O}$ includes the set of target gates $S_{target}$, the dependency graph $G_D$, the hardware graph $G_H$, and the gate sequence constructed up to the $t$\emph{th} step $\mathbf{s}^{(t)} = (s_1, \dots, s_{t})$. Specifically, we have $o_t = [S_{target}, G_D, G_H, \mathbf{s}^{(t)}]$.
    \item $\mathbf{A}$: The action is defined in form of $a_t = (g, q^H_u, q^H_v)$ with $g \in \mathcal{T}$ and $(q_u^H, q_v^H) \in E_H$. 
    \item $\pi$: The RL’s policy is for selecting actions.
    \item $r$: The immediate reward at each time step evaluates the quality of the selected action.
\end{itemize}


In order to describe the workflow of the RL module, we define a round of finding the optimal solution (i.e., gate sequence) for an input consisting of a target set $S_{target}$, the dependency graph $G_D$ and the hardware graph $G_H$ as an episode. Given a training set of inputs, the RL module executes multiple episodes to explore solutions of those inputs. Within each episode, the RL module constructs the solution incrementally through a series of steps. 
Specifically, at each step $t$ of an episode, the RL module transitions from the previous state $o_{t-1}$ to a new state $o_{t}$ based on the following process. First, the action $a_t = (g, q^H_u, q^H_v)$ is selected based on the policy $\pi$. The action is then converted to the gate $s_{t} = (g, \phi_{t-1}^{-1}(q^H_u), \phi_{t-1}^{-1}(q^H_v))$ where the assignment $\phi_0$ is initialized at the start of each episode and is updated at each step. The selected gate $s_{t}$ must satisfy three constraints including inclusion, dependency and connectivity. For the inclusion constraint, $s_t$ belong to $S_{target} \cup S_w$. For the dependency constraint, given a gate $s' \in S_{target}$, if there exists a path from $s'$ to $s_{t}$ in the dependency graph $G_D$, then $s' \in \mathbf{s}^{(t-1)}$. For the connectivity constraint, there is an edge between $q^H_u$ and $q^H_v$ in the hardware graph $G_H$. We notice that at least one gate in $S_w$ always satisfies all three constraints, ensuring that the episode can continue without being interrupted due to the absence of a valid action. Finally, the new state is updated as $o_t = (S_{target}, G_D, G_H, \mathbf{s}^{(t)} =\mathbf{s}^{(t-1)} \cup \{s_t\})$. The assignment $\phi^{(t)}$ is also updated based on $\phi^{(t-1)}$ and $s_t$ (see Section 4.1 for details on updating the assignment). The reward $r_{t}$ in step $t$ is calculated as: 
\[
r_t = 
\begin{cases} 
-1 & \text{if } s_t \in S_w \\
+1 & \text{if } s_t \in S_{target} \setminus \mathbf{s}^{(t-1)} \\
-\beta_1 |S_{target} \setminus \mathbf{s}^{(T)}| & \text{if } t > T \\
\beta_2 \tilde{F}_{\mathcal{E}}(H^{(e)}_{\varphi_e}(\Psi(\mathbf{s}^{(t)}, G_D), G_H)) & \text{if } |S_{target}| = 0 \\
\end{cases}
\]

Let us explain the reward function in more details. First, we aim to minimize redundant SWAP gates which may lead to unnecessary noise while promoting the selection of target gates. Therefore, we assign a negative reward for any SWAP gate $s_t \in S_w$ and a positive reward for selecting a target gate $s_t \in S_{target}$. In addition, we implement delayed rewards when reaching the terminal state of the episode. Specifically, an episode terminates when one of the following two conditions is met. The first termination condition is $t > T$. The occurrence of this condition indicates that the episode has resulted in an infeasible solution because the length of the gate sequence exceeds the allowed gate limit, i.e., $|\mathbf{s}^{(t)}| > T$.
If the solution is infeasible, a negative reward proportional to the number of remaining target gates is imposed. Conversely, if the solution is feasible, a positive reward based on the estimated fidelity of the final gate sequence is assigned. This fidelity is computed by translating the sequence into a circuit using Algorithm~\ref{alg:translation} (i.e., $\Psi$ function), embedding it into a latent space using the encoder $H^{(e)}_{\varphi_e}$, and estimating it with the GP-based fidelity estimator $\tilde{F}_{\mathcal{E}}$. The constants $\beta_1$ and $\beta_2$ are used to scale delayed rewards to an appropriate range. This reward structure encourages the pursuit of high-fidelity feasible solutions.

Information in an episode including states, actions, and rewards is stored in a replay buffer $\mathcal{B}$. After a predefined number of episodes, the policy $\pi$ is updated based on information stored in $\mathcal{B}$. Then, the updated policy is used to generate new episodes. This iterative process enables the RL module to learn an optimal policy for solving the FMRS problem effectively. In the subsequent section, we describe the representation of the policy and the Actor-Critic algorithm used to optimize the policy.

\subsubsection{Actor-Critic Algorithm for Solving the MDP}
To solve the MDP of the FMRS problem, we utilize the actor-critic deep reinforcement learning framework. In this approach, the policy is represented as two learning networks: actor and critic. The actor network outputs action probabilities, guiding decision-making. On the other hand, the critic network estimates state values, providing feedback to enhance learning.

\noindent{\bf Actor network.} In the MDP for the FMRS problem, the state $o_t = [S_{target}, G_D, G_H, \mathbf{s}^{(t)}]$ contains a substantial amount of information. We divide the state information into two types: physical and logical information. The physical information includes the hardware connectivity $G_H$, and the circuit $C = \Psi(\mathbf{s}^{(t)}, G_D)$.  To process this information, we apply a TGCN model following the formulation in Equation~\ref{eq:tgcn}. Specifically, the physical information is represented as:

\begin{align}
    \mathbf{X}_{physic} = H^{(tgcn)}_{\varphi_{physic}}(C, G_H)
\end{align}

where $\varphi_{physic}$ denotes the trainable parameters of the model.

Logical information includes the dependency graph $G_D$, the set of target gates $S_{target}$ and the partial solution $\mathbf{s}^{(t)}$. We can encode $S_{target}$ and $\mathbf{s}^{(t)}$ as a feature vector of $G_D$, denoted as $f_D \in \{0,1\}^{|S_{target}|}$ such that $f_D[s] = 1 \forall s\in S_{target} \setminus \mathbf{s}^{(t)}$ and $f_D[s]  = 0$ otherwise. To capture the logical dependencies, we apply a GCN model as outlined in Equation~\ref{eq:gcn}:
\begin{align}
    \mathbf{X}_{logic} = H^{(gcn)}_{\varphi_{logic}}(f_D, G_D)
\end{align}
where $\varphi_{logic}$ denotes trainable weights in the model. Finally, the actor network is formed by passing the combination of the physical and logical information to a fully connected layer:

$$H^{(actor)}_{\varphi_{actor}}(o_t) = W_{actor,2}ReLU(W_{a,1}[\mathbf{X}_{physic}, \mathbf{X}_{logic}] + b_{actor,1}) + b_{actor,2} = \mathbf{X}_{actor}$$
where $\varphi_{actor} = \varphi_{physic} \cup \varphi_{logic} \cup [W_{actor,1}, b_{actor,1}, W_{actor,2}, b_{actor,2}]$. Here, we have $\mathbf{X}_{actor} \in \mathbb{R}^{(|\mathcal{T}|\|E_H|) \times 1}$, where $|\mathcal{T}||E_H|$ corresponds to the dimension of the action space. The policy $\pi$ can be calculated from the actor's outputs as follows:
\begin{align}
    \pi_{\varphi_{actor}} (a_i|o_t) = \frac{exp(\mathbf{X}_{actor}[i])}{\sum_j exp(\mathbf{X}_{actor}[j])} \forall i \in [1,|\mathcal{T}||E_H|]
\end{align}

\noindent{\bf Critic network.}
Given the state $o_t = [S_{target}, G_D, G_H, \mathbf{s}^{(t)}]$, the critic network aims to evaluate the value of $o_t$. The structure of the critic network is similar to the structure of the actor network. Specifically, we have:

$$\mathbf{X'}_{physic} = H^{(tgcn)}_{\varphi'_{physic}}(C, G_H)$$

$$\mathbf{X'}_{logic} = H^{(gcn)}_{\varphi'_{logic}}(f_D, G_D)$$

$$H^{(critic)}_{\varphi_{critic}}(o_t) = W_{critic,2}ReLU(W_{critic,1}[\mathbf{X}'_{physic}, \mathbf{X}'_{logic}] + b_{critic,1}) + b_{critic,2} = \mathbf{X}_{critic}$$

where $\varphi_{critic} = \varphi'_{physic} \cup \varphi'_{logic} \cup [W_{critic,1}, b_{critic,1}, W_{critic,2}, b_{critic,2}]$ are the trainable parameters of the critic model. Here, we have $\mathbf{X}_{critic} \in \mathbb{R}$ which is the estimated value for the state $o_t$.

\noindent{\bf Updating the actor and critic networks.}
In the RL framework, the agent interacts with the environment in discrete time intervals. The interaction experience $(o_t, a_t, r_t, o_{t+1})$ is stored in a replay buffer $\mathcal{B}$, which is used to refine the weights of the actor and critic network. Given the discount factor $\gamma_r$, the loss function of the actor-critic algorithm is:
\begin{align}
\mathcal{L}(\varphi_{actor}, \varphi_{critic}) = \hat{\mathbb{E}}_{(o_t, a_t, r_t, o_{t+1}) \sim \mathcal{B}}\left[\log \pi_{\varphi_{actor}}\left(a_t \mid o_t\right) (r_t + \gamma_r H^{(critic)}_{\varphi_{critic}}(o_{t+1})- H^{(critic)}_{\varphi_{critic}}(o_t))\right]
  \label{eq:actor-critic loss}
\end{align}

Given the learning rates for the actor and critic models as $\alpha_{actor}$ and $\alpha_{critic}$, the parameters $\varphi_{actor}$ of the actor model and $\varphi_{critic}$ of the critic model can be updated as follows:

\begin{align}
\varphi_{actor} \leftarrow \varphi_{actor} - \alpha_{actor} \nabla_{\varphi_{actor}} \mathcal{L}(\varphi_{actor}, \varphi_{critic})
 \label{eq:actor update}
\end{align}

\begin{align}
\varphi_{critic} \leftarrow \varphi_{critic} - \alpha_{critic} \nabla_{\varphi_{critic}} \mathcal{L}(\varphi_{actor}, \varphi_{critic})
 \label{eq:critic update}
\end{align}

\subsubsection{Integration Within the Transpilation Pipeline}

The FIDDLE framework acts as a specialized routing engine that operates between the layout and translation stages in the quantum transpilation pipeline. Specifically, it treats the initial mapping of logical to physical qubits (layout) and the rules for decomposing gates into hardware-native operations (translation) as fixed inputs. This approach allows FIDDLE to fully dedicate its reinforcement learning capabilities to solving the complex routing problem by exploring the vast search space of gate sequences and SWAP insertions to find a solution that maximizes fidelity.

By working with a fixed initial layout and constant translation rules, FIDDLE’s performance naturally depends on the quality of those inputs, grounding the optimization in hardware realities. A well-chosen layout gives a good starting point, while fixed translation ensures fidelity estimates accurately capture the noise characteristics of the final physical circuit. This modular approach lets FIDDLE perform deep, fidelity-aware routing optimization without the complexity of handling all transpilation stages at once.

\section{Experiments}
\label{sec:experiments}
In this section, we first present the experimental setup. Next, we evaluate the effectiveness of our proposed surrogate model in estimating fidelity, which is the initial component of our solution. Finally, we compare the performance of our proposed framework, \NAMEA, with three state-of-the-art transpilation frameworks: Qiskit~\cite{qiskit}, VIC~\cite{Alam2020} and GNN-RL~\mbox{\cite{Saravanan_2024}}.

\subsection{Experimental Setup}
\label{sec:exp:setup}

\noindent {\bf Datasets.}
In our experiment, we focus on quantum circuits related to two illustrative quantum algorithms: the Quantum Approximate Optimization Algorithm (QAOA)\cite{Farhi2014} and Quantum Machine Learning (QML)\cite{Biamonte2017}. These algorithms were selected for three reasons. First, they represent two major domains in quantum computing which are quantum optimization and quantum machine learning. Improving circuit reliability in these domains is crucial for demonstrating practical quantum advantages. Second, both algorithms require repeated execution of their circuits to achieve final results. This makes circuit reliability even more critical, because errors compounding over multiple runs can directly impacting overall performance.
Finally, these algorithms involve a diverse set of target gates, and many of target gates can commute to each other. This introduces significant complexity to the routing stage because the execution order can vary and must be optimized to maximize fidelity. In contrast, algorithms like the Quantum Fourier Transform (QFT) have a fixed set of gates with a strict execution order, leaving little room for alternative routing strategies. As a result, testing on QAOA and QML circuits can highlight the advantages of our proposed routing method by showing its ability to handle complex and dynamic scenarios.

Let us analyze the differences between QAOA and QML circuits. In the case of QAOA, the set of gate types is  $\mathcal{T} = \{R_{ZZ}, R_{X}\}$ The target set of gates in QAOA is derived from the logical graph corresponding to the optimization problem being addressed. In this context, $R_{ZZ}$ gates are fully commutative with one another, whereas $R_{X}$ gates are not commutative with $R_{ZZ}$ gates. As a result, the dependency graphs associated with target gates in QAOA are sparse. For the QML application, we focus on the entanglement phase, where the gate set is $\mathcal{T} = \{R_{ZZ}, C_{X}\}$. The target set of gates is generated using the \emph{TwoLocal} library provided by Qiskit~\cite{Morales_2023}. The commutation rules in QML circuits result in more complex dependency graphs. Specifically, two $C_{X}$ gates commute only if they act on distinct pairs of control and target qubits. By applying our proposed approach, \NAMEA, to these two applications, we evaluate its performance across varying commutation properties and dependency graph structures, providing insights into its adaptability and effectiveness.

In this experiment, we construct four datasets comprising QAOA and QML circuits for 5-qubit and 7-qubit systems. These datasets are denoted as QAOA\_5, QAOA\_7, QML\_5 and QML\_7. Each instance within these datasets includes a set of target gates and the corresponding dependency graph, as specified by algorithms. All experiments in this study are performed using the \texttt{Cirq} quantum circuit simulator. The limitation to 5- and 7-qubit systems arises from the high computational cost associated with computing process fidelity, which we use as ground-truth labels for training the surrogate model. Estimating process fidelity for circuits with more qubits by existing methods in the literature such as direct fidelity estimation is extremely expensive and may take months to years to generate a sufficient number of training samples from scratch. This limitation is tied to data generation, not the scalability of our method itself. Once training samples sufficiently provided, our surrogate model and reinforcement learning framework can scale efficiently. Specifically, given $n_s$ fidelity training samples, the complexity of training our surrogate model is $\mathcal{O}(n_s^3 + n_s^2 d_z)$ where $d_z$ is the dimension of the latent space. In addition, the complexity of training our RL framework is $\mathcal{O}(B \times[M \times(|\mathcal{E}_H|+N)+|\mathcal{E}_D|])$ where $B$ is the batch size, $M$ is the circuit maximum depth, $N$ is the number of qubits, $|\mathcal{E}_H|$ is the number of hardware graph edges and $|\mathcal{E}_D|$ is the number of edges in the dependency graph.

\noindent {\bf Noise models.} 
In this work, we evaluate the performance of our proposed technique, \NAMEA, across various noisy environments defined by different noise models. We focus on five noise models: depolarizing, bit-flip, phase-flip, mix, and real. Specifically, for the depolarizing, bit-flip, and phase-flip models, we apply a noise rate of $0.008$ to two qubits, while a lower noise rate of $0.002$ is applied to all other qubits. The mix noise model applies all three types of noise simultaneously. Lastly, the real noise model replicates the actual noise conditions of the Rainbow quantum processors by Google~\cite{Stanisic2022}.

\noindent {\bf Training details.} 
For each dataset, we use $100$ instances for training the encoder and the surrogate model, denoted as $\mathcal{S}^{train}_{enc\_surr} = \{(S_{target}^{(i)}, G_D^{(i)})\}_{i=1}^{100}$. Each target set contains between $3$ and $15$ target gates. To train the encoder, we randomly generate $50$ circuits for each target set $S_{target}^{(i)} \in \mathcal{S}^{train}_{enc\_surr}$ using a random routing algorithm. Specifically, this algorithm constructs various gate sequences by randomly inserting target gates and SWAP gates. From these, $50$ valid sequences are selected and translated into circuits. For each circuit, we calculate its trait vector $\mathbf{y}$. This process results in a training set $\mathcal{K}^{train}_{enc} = \{C_i, \mathbf{y}_i\}_{i=1}^{5000}$ for training the encoder.

We then randomly select 300 circuits from $\mathcal{K}^{train}_{enc}$ and measure their fidelity using direct fidelity measurement~\cite{Flammia_2011}. We use $250$ samples for training and $50$ samples for evaluating the surrogate model. The trained encoder and surrogate model are then used to train the RL model. To ensure that the training data for the surrogate model and the RL model is independent, we generate an additional $100$ instances for training the RL model, denoted as $\mathcal{S}^{train}_{RL} = \{(S_{target}^{(i)}, G_D^{(i)})\}_{i=1}^{100}$. After training, the performance of the RL model is evaluated on $100$ unseen instances, denoted as $\mathcal{S}^{test}_{RL} = \{(S_{target}^{(i)}, G_D^{(i)})\}_{i=1}^{100}$. Each target set in $\mathcal{S}^{train}_{RL}$ and $\mathcal{S}^{test}_{RL}$ contains between $3$ and $15$ target gates.

In this experiment, we evaluate the performance of \NAMEA\ by comparing it with two transpilation frameworks: Qiskit~\cite{qiskit}, VIC~\cite{Alam2020} and GNN-RL~\mbox{\cite{Saravanan_2024}}. For Qiskit, we evaluate three routing algorithms—\emph{sabre}, \emph{lookahead}, and \emph{stochastic}—and report the highest fidelity achieved among them. On the other hand, VIC, which is specifically designed for the transpilation of QAOA circuits, is only included in the QAOA-related experiments. GNN-RL combines the GNN-based fidelity estimator from \mbox{\cite{Saravanan_2024}} with our proposed reinforcement learning (RL) method. We compare it with \mbox{\NAMEA}, which uses a GP-based fidelity estimator, to evaluate how the choice of estimator affects the overall performance of the learning framework. In this work, we do not consider the layout stage, so all benchmarks are conducted using a fixed initial assignment. We use the same hardware topology for the graph $G_H$ in every experiment, which is based on the grid architecture of the Rainbow processor.

\subsection{Evaluating the Performance of the Surrogate Model}
We assess the efficiency of our proposed method for estimating the fidelity in two steps. First, we analyze the latent space by showing the correlation between the distances of latent embeddings and their corresponding fidelity gaps. Next, we evaluate the performance of the surrogate model trained on these latent embeddings.

\subsubsection{Latent space analysis}
In our work, we embed circuits into a latent space and train a surrogate model using the embeddings in that space. The performance of the surrogate model directly depends on the quality of the latent space. An effective latent space is a low-dimensional representation that captures the essential features of the circuits necessary for predicting fidelity. In this experiment, we assess the quality of the latent space by analyzing the correlation between the similarity of embedding vectors (measured by Euclidean distance) and the similarity of their corresponding fidelities (measured by the L1 norm). A high correlation suggests that the optimal surrogate function in the latent space is likely smooth, which facilitates the construction of a more efficient and accurate GP model. Specifically, smoothness ensures that the GP can effectively capture the relationships in the latent space with fewer data points, improving interpolation and reducing computational complexity. 

Table~\ref{tab:correlation} presents the correlation results across different test cases and noise models. Overall, the correlation in all cases exceeds 
$0.5$, demonstrating a strong relationship between distances in the latent space and fidelity gaps. This result implies that neighboring points in the latent space tend to have similar fidelity. Therefore, our proposed framework is effective in learning a latent space that supports the efficient training of GP models.

In addition, we observe that the bit-flip model consistently yield the lowest correlations in most cases. Notably, for QML circuits on 5-qubit and 7-qubit systems, the bit-flip model results in correlations approximately $0.2$ and $0.3$ lower than the models with the highest correlations respectively. Let us delve deeper into the bit-flip noise model to understand why the similarity of latent embeddings shows a weaker correlation with the corresponding fidelity gap under this noise model. The low correlation can be attributed to the ability of swap gates to mitigate the effects of bit-flip errors in certain scenarios. Specifically, when bit-flip noise is activated on two qubits, its effect can be equivalent to applying a swap gate. As a result, the presence of multiple swap gates in a circuit can effectively cancel out the impact of bit-flip noise. Consequently, two circuits that are far apart in the latent space (e.g., due to significant differences in depth or gate count) can still exhibit similar fidelity, reducing the overall correlation.

\begin{table}[ht]
\begin{tabular}{|ll|l|l|l|l|l|}
\hline
 &  & Depolarized & Bit flip & Phase flip & Mix & Real \\ \hline
\multicolumn{1}{|l|}{\multirow{2}{*}{QAOA}} & Qubit = 5 & 0.9491 & 0.9237 & 0.9380 & 0.9383 & 0.9111 \\ \cline{2-7} 
\multicolumn{1}{|l|}{} & Qubit = 7 & 0.7014 & 0.6941 & 0.7107 & 0.6716 & 0.6973 \\ \hline
\multicolumn{1}{|l|}{\multirow{2}{*}{QML}} & Qubit = 5 & 0.7088 & 0.5057 & 0.6980 & 0.7007 & 0.7978 \\ \cline{2-7} 
\multicolumn{1}{|l|}{} & Qubit = 7 & 0.7945 & 0.5157 & 0.7881 & 0.7420 & 0.6928 \\ \hline
\end{tabular}
\caption{Correlation between distance in latent space and fidelity gap}
\label{tab:correlation}
\end{table}

\subsubsection{Surrogate model analysis}

In this section, we evaluate the performance of our proposed surrogate model and compare it with the GNN-based fidelity estimator~\cite{Saravanan_2024} to demonstrate the efficiency of our method in estimating the fidelity function.

Table~\ref{tab:surrogate:loss} reports the RMSE loss of our surrogate model using five different kernels: Exponential, RBF, Polynomial, MLP, and Rational Quadratic, across five noise models. The results indicate that the Exponential and MLP kernels are the most effective for estimating the fidelity function, achieving the best loss in most cases. Specifically, the Exponential kernel performs best for the 5-qubit system, while the MLP kernel excels in the 7-qubit system. In contrast, the RBF kernel, despite its popularity in GP applications, often performs poorly, yielding the highest loss in several cases. Notably, for the bit-flip noise model on the 5-qubit system, the RBF kernel produces a loss of $0.1193$, which is 16 times higher than the best-performing kernel.

We also observe a strong relationship between the correlation of latent embedding distances with fidelity gap and the surrogate model's performance. Noise models with high correlations tend to yield lower RMSE losses, highlighting the critical role of a well-structured latent space in improving the efficiency of GP-based models.

When compared with the RMSE loss of the GNN-based fidelity estimator, our GP model with the best-performing kernel consistently outperforms the GNN approach. Specifically, our surrogate model achieves RMSE losses that are $1.5$ to $2.5$ times lower than those of the GNN-based model. This demonstrates the superior capability of our generative approach in estimating fidelity functions, particularly when training data is limited, as compared to a discriminative model.

\begin{table}[]
\centering
\begin{tabular}{|c|c|c|c|c|c|c|c|}
\hline
Datasets & \multicolumn{2}{c}{Estimators} & \multicolumn{5}{|c|}{Noise models} \\ \cline{4-8} 
 & \multicolumn{1}{c}{}& & Depolarized & Bit flip & Phase flip & Mix & Real \\ \hline \hline
\multirow{6}{*}{QAOA\_5} & \multicolumn{2}{c|}{GNN-based~\cite{Saravanan_2024}} & 0.0122 & 0.0162 & 0.0117 & 0.0160 & 0.0235 \\ \cline{2-8}
& \multirow{5}{*}{GP-based}& {Exponential} & \textbf{0.0052} & \textbf{0.0075} & \textbf{0.0039} & \textbf{0.0093} & 0.0154 \\  
 & & RBF & 0.0077 & 0.1193 & 0.0145 & 0.0875 & 0.0150 \\  
 & & Poly & 0.0125 & 0.0182 & 0.0372 & 0.0099 & 0.0150 \\  
 & & MLP & 0.0057 & 0.0106 & 0.0090 & 0.0120 & \textbf{0.0148} \\  
 & & RatQuad & 0.0077 & 0.0107 & 0.0088 & 0.0148 & 0.0150 \\  \hline
 \multirow{6}{*}{QAOA\_7} & \multicolumn{2}{c|}{GNN-based~\cite{Saravanan_2024}} & 0.0255 & 0.0298 & 0.0245 & 0.0245 & 0.0323 \\ \cline{2-8}
 & \multirow{5}{*}{GP-based} & {Exponential} & 0.0091 & \textbf{0.0112} & 0.0049 & \textbf{0.0077} &  \textbf{0.0255}\\  
 & & RBF & 0.0112 & 0.0156 & 0.0080 & 0.0141 & 0.0508 \\  
 & & Poly & 0.0127 & 0.0231 & 0.0086 & 0.0157 &  0.0709\\  
 & & MLP & \textbf{0.0067} & 0.0115 & \textbf{0.0046} & 0.0082 & 0.0255 \\  
 & & RatQuad & 0.0069 & 0.0125 & 0.0049 & 0.0089 & 0.0255 \\  \hline
\multirow{6}{*}{QML\_5} & \multicolumn{2}{c|}{GNN-based~\cite{Saravanan_2024}} & 0.0297 & 0.0352 & 0.0295 & 0.0391 & 0.0311 \\ \cline{2-8}
& \multirow{5}{*}{GP-based} & Exponential & \textbf{0.0122} & \textbf{0.0260} & \textbf{0.0094} & \textbf{0.0189} & 0.0311 \\  
  & & RBF & 0.0270 & 0.0699 & 0.0200 & 0.0379 & 0.2184 \\  
 & & Poly & 0.0147 & 0.0344 & 0.0127 & 0.0393 &0.0325  \\  
 & & MLP & 0.0122 & 0.0260 & 0.0094 & 0.0189 & 0.0323 \\  
 & & RatQuad & 0.0124 & 0.0264 & 0.0104 & 0.0206 & \textbf{0.0306} \\  \hline
 \multirow{6}{*}{QML\_7} & \multicolumn{2}{c|}{GNN-based~\cite{Saravanan_2024}} & 0.0297 & 0.0464 & 0.0263 & 0.0287 & 0.0583 \\ \cline{2-8}
 & \multirow{5}{*}{GP-based} & Exponential & \textbf{0.0121} & 0.0408 & \textbf{0.0068} & 0.0163 & 0.0355 \\  
 & & RBF & 0.0183 & 0.0514 & 0.0110 & 0.0226 & 0.0643 \\ 
& & Poly & 0.0290 & 0.0896 & 0.0217& 0.0319 & 0.1194 \\  
 & & MLP & 0.0124 & \textbf{0.0407} & 0.0068 & \textbf{0.0158} & \textbf{0.0350} \\  
 & & RatQuad & 0.0125 & 0.0411 & 0.0074 & 0.0166 & 0.0356 \\  \hline
\end{tabular}
\caption{RMSE Loss comparison of the proposed GP-based surrogate model using different kernels against GNN-based estimator across five noise models.}
\label{tab:surrogate:loss}
\end{table}

\subsection{Evaluating the Performance of the End-to-End Framework}

In this section, we first present the training performance of our proposed framework, \NAMEA, to demonstrate its effectiveness in exploring the solution space. Next, we compare the performance of \NAMEA\ against state-of-the-art methods on unseen circuits to illustrate its efficiency in optimizing circuit fidelity in practical.

\subsubsection{Training performance}
We evaluate two key aspects of the training process. The first is the RL model's ability to identify feasible solutions (i.e., valid gate sequence). Ensuring that all constraints are satisfied is a fundamental prerequisite before addressing fidelity improvements. The second aspect is the effectiveness of the RL model in exploring solutions with high fidelity. This evaluation is essential for determining the capability of the RL approach to improve the circuit fidelity. To address these aspects, we analyze and present the training performance of our model in this experiment.

For the first aspect, we introduce the feasibility rate is the ratio of episodes resulting in a feasible solution to the total number of episodes within a fixed interval. 
In this work, we calculate the feasibility rate every interval of 200 episodes. Figure~\ref{fig:training:feasibility} illustrates the feasibility rate during the training process across four datasets and five noise models. The results show that our proposed RL method achieves the highest feasibility rates for the QAOA\_5, QAOA\_7, and QML\_5 datasets. Notably, even for the more complex QML\_7 dataset, \NAMEA\ attains a feasibility rate exceeding $0.9$. These results indicate that \NAMEA\ reliably finds feasible solutions with a very high probability.

For the second aspect, we define the fidelity improvement rate as the difference between the average fidelity achieved by \NAMEA\ and the average fidelity obtained using a random selection strategy on the training set. Specifically, for each training instance, the random selection strategy selects gates $g \in S_{target} \cup S_w$ randomly, representing the \NAMEA's policy before any training. Thus, the fidelity improvement rate quantifies how much \NAMEA\ enhances fidelity during training. 

Figure~\ref{fig:training:fidelity} illustrates the fidelity improvement rate across different datasets and noise models throughout the training process. A common upward trend in the fidelity improvement rate is observed across all cases. Specifically, after $25,000$ training episodes for 5-qubit datasets and $100,000$ episodes for 7-qubit datasets, fidelity improves by $1.5\%$ to $9\%$. This result demonstrates the effectiveness of our RL approach in optimizing fidelity.

The improvement rate of noise models, however, varies across different datasets. For the QAOA datasets, the improvement range is relatively consistent across all noise models, with values ranging from $2\%$ to $5\%$ for QAOA\_5 and $4\%$ to $7\%$ for QAOA\_7. Notably, the real noise model benefits the most from \NAMEA, showing the largest fidelity improvements in the QAOA datasets. Therefore, \NAMEA\ can be a valuable tool for improving fidelity in real-world quantum systems.

In contrast, for the QML datasets, the improvement rate exhibits greater variation across noise models, ranging from $1.5\%$ to $9\%$ for QML\_5 and $2\%$ to $8\%$ for QML\_7. Interestingly, the bit-flip noise model shows the least improvement compared to others. This indicates that bit-flip noise is harder to be improved in QML circuits. This phenomenon is likely due to the high loss of the surrogate model under this noise type. This finding highlights the importance of developing accurate surrogate models for optimizing fidelity effectively.

\begin{figure*}[t]
\begin{subfigure}{.24\textwidth}
  \centering
  \includegraphics[width=1\textwidth]{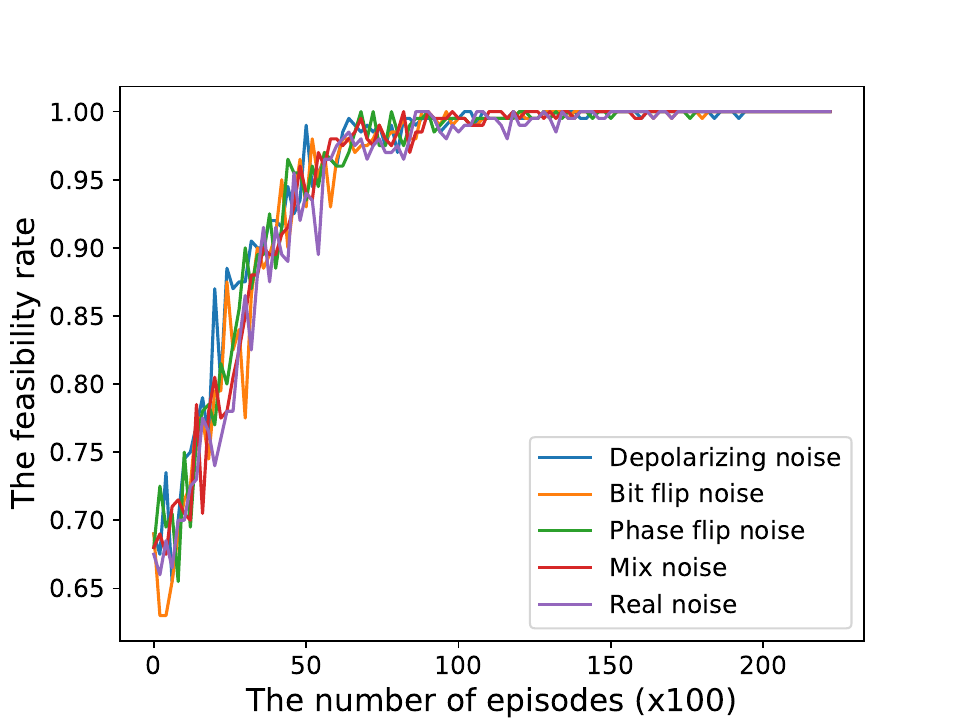}
  \caption{QAOA\_5}
  \label{}
\end{subfigure}
\hfill
\begin{subfigure}{.24\textwidth}
  \centering
  \includegraphics[width=1\textwidth]{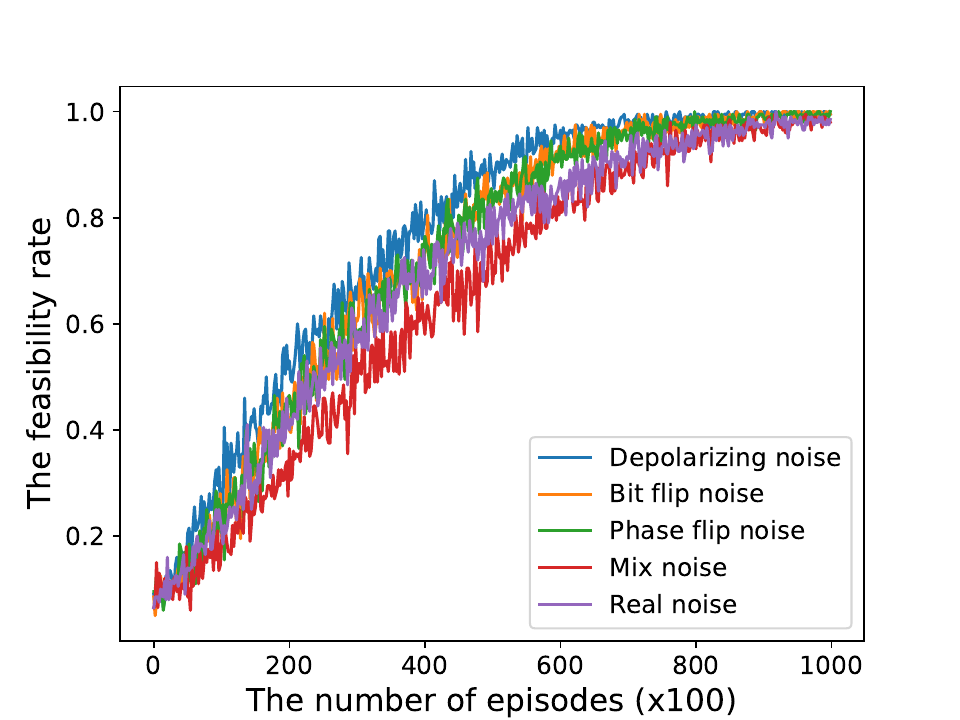}
  \caption{QAOA\_7}
  \label{}
\end{subfigure}
\hfill
\begin{subfigure}{.24\textwidth}
  \centering
  \includegraphics[width=1\textwidth]{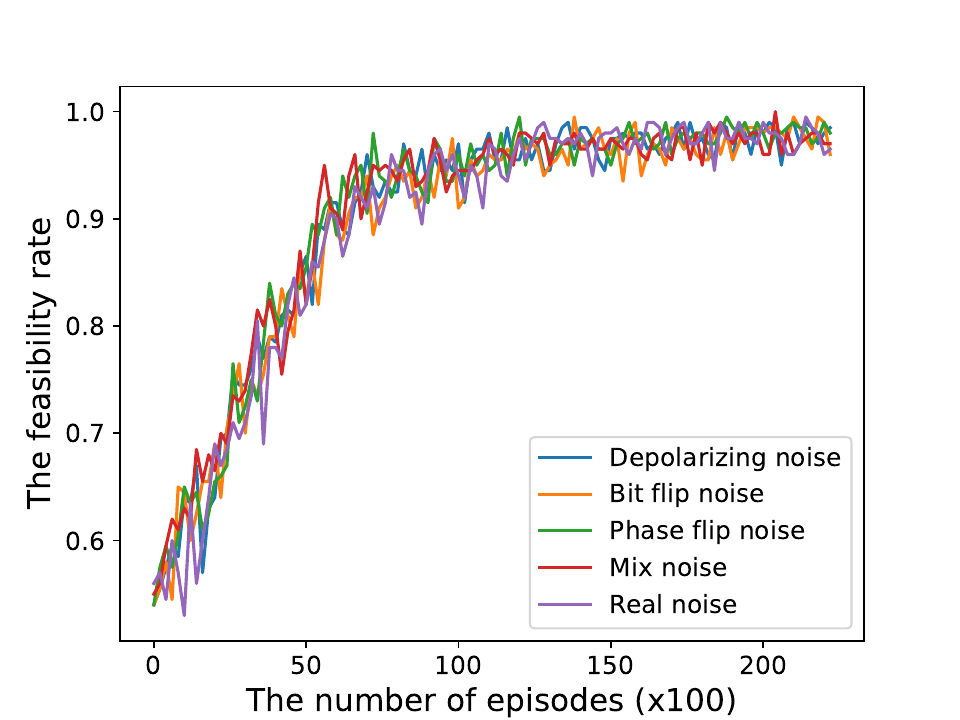}
  \caption{QML\_5}
  \label{}
\end{subfigure}
\hfill
\begin{subfigure}{.24\textwidth}
  \centering
  \includegraphics[width=1\textwidth]{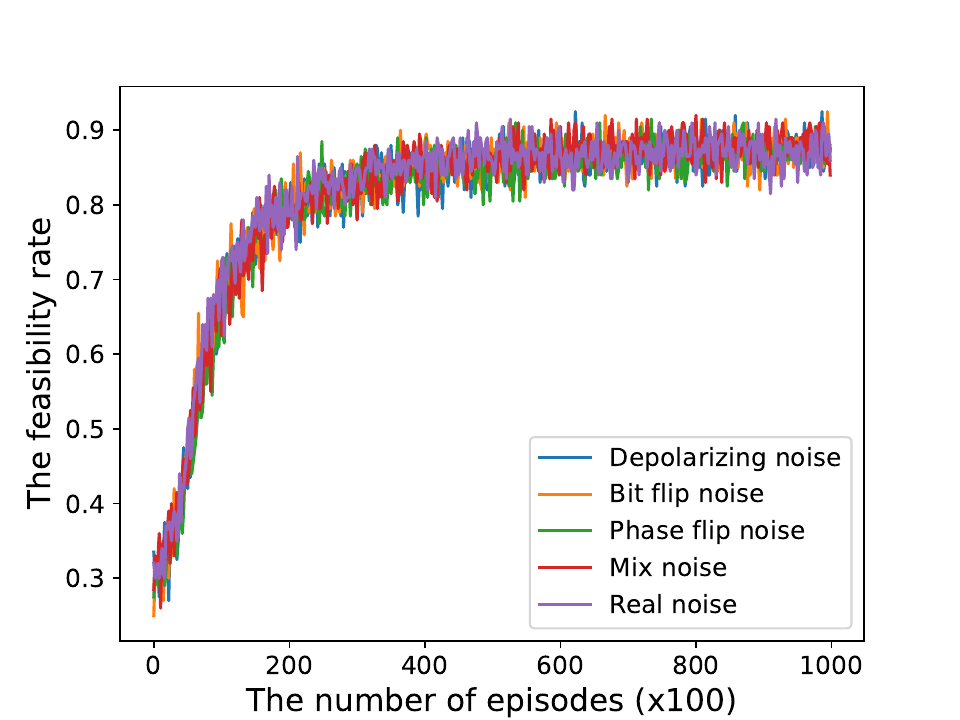}
  \caption{QML\_7}
  \label{}
\end{subfigure}
\caption{The feasibility rate during training with 4 datasets.} 
\label{fig:training:feasibility}
\end{figure*}

\begin{figure*}[ht]
\begin{subfigure}{.24\textwidth}
  \centering
  \includegraphics[width=1\textwidth]{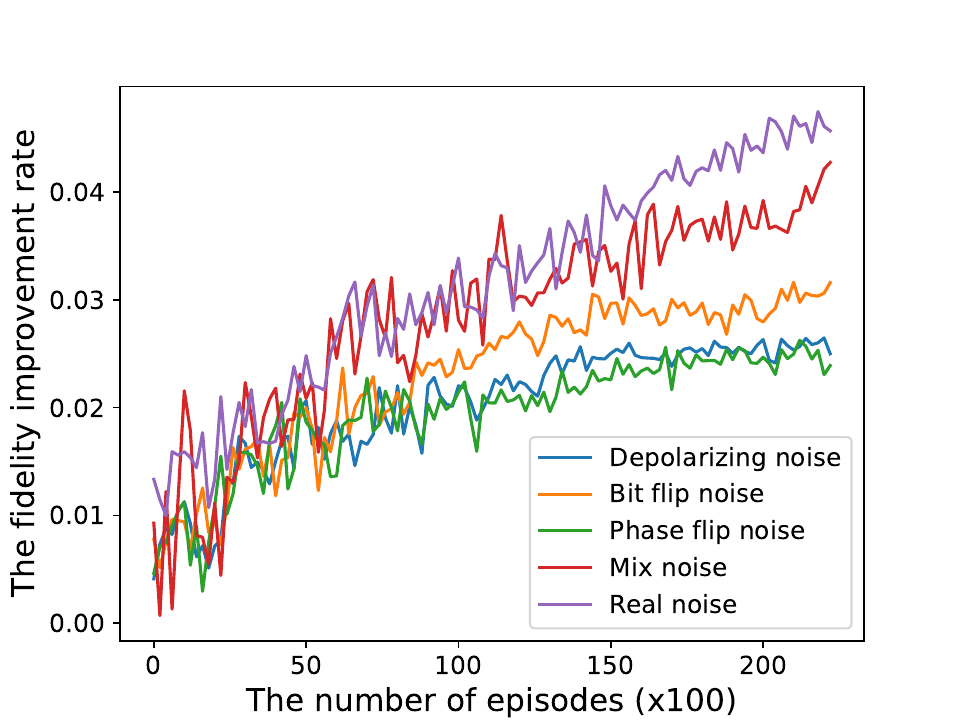}
  \caption{QAOA\_5}
  \label{}
\end{subfigure}
\hfill
\begin{subfigure}{.24\textwidth}
  \centering
  \includegraphics[width=1\textwidth]{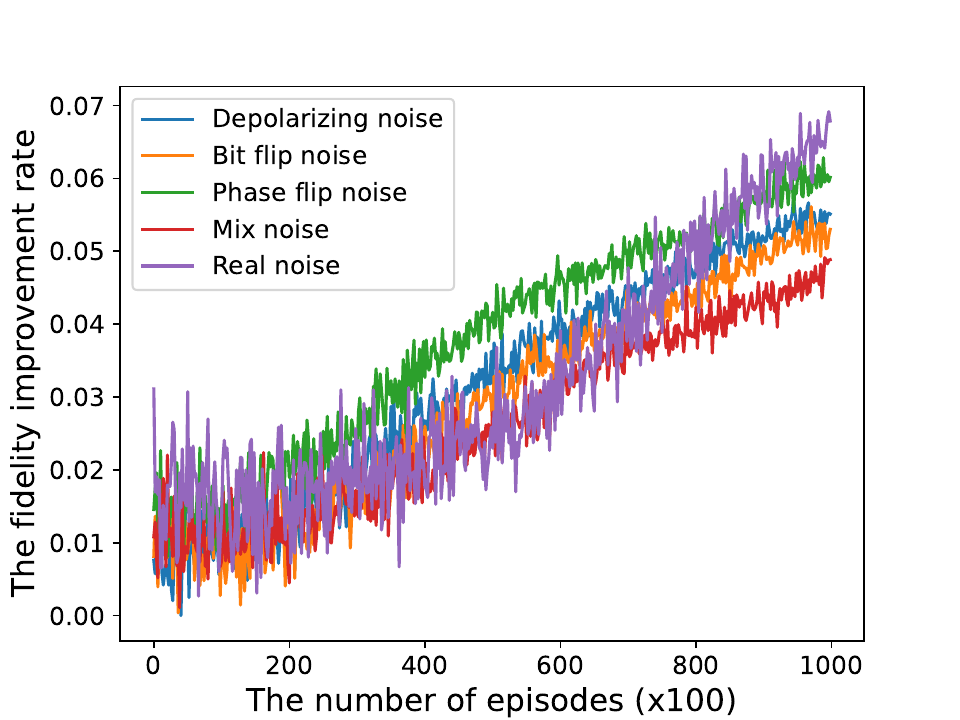}
  \caption{QAOA\_7}
  \label{}
\end{subfigure}
\hfill
\begin{subfigure}{.24\textwidth}
  \centering
  \includegraphics[width=1\textwidth]{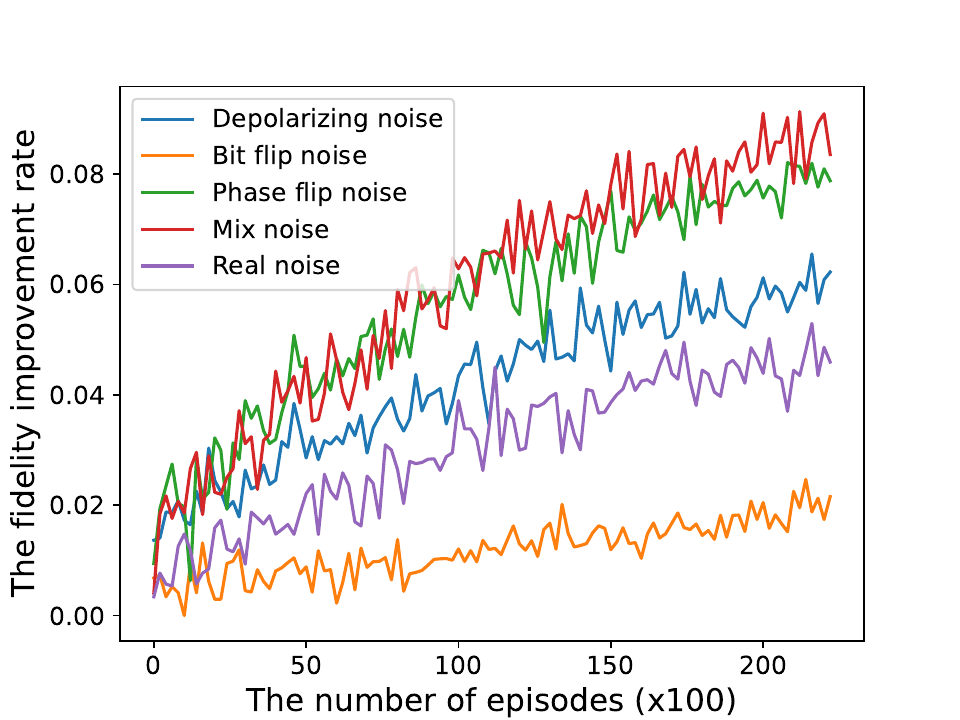}
  \caption{QML\_5}
  \label{}
\end{subfigure}
\hfill
\begin{subfigure}{.24\textwidth}
  \centering
  \includegraphics[width=1\textwidth]{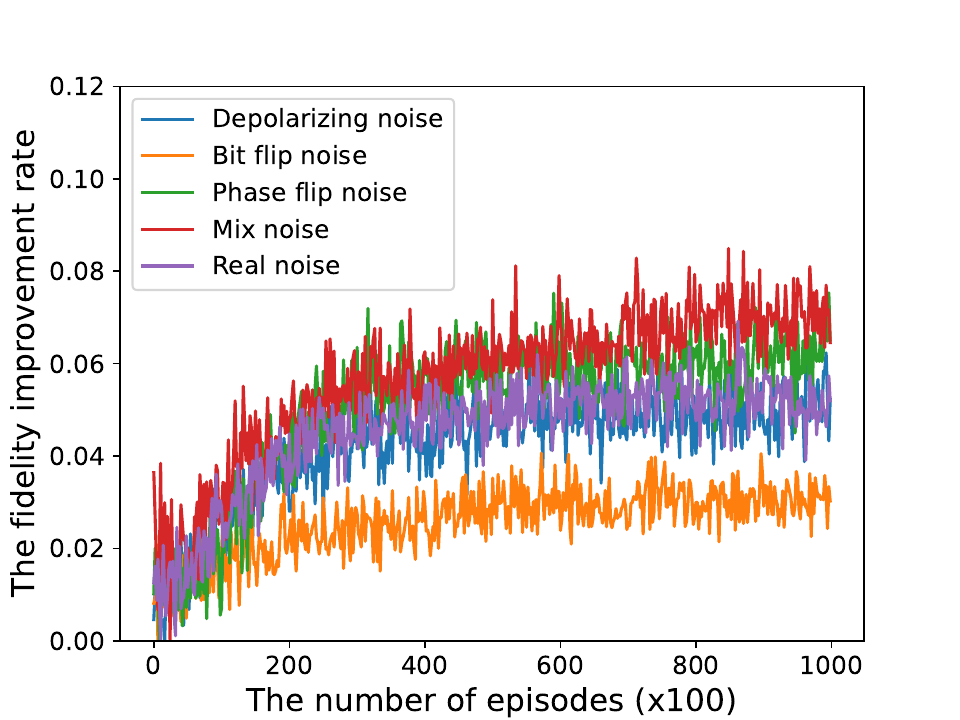}
  \caption{QML\_7}
  \label{}
\end{subfigure}
\caption{The fidelity improvement rate during training with 4 datasets.} 
\label{fig:training:fidelity}
\end{figure*}

\subsubsection{Inference performance}
We compare the fidelity of solutions produced by \NAMEA\ with those generated by state-of-the-art methods, including Qiskit, VIC and GNN-RL, across four test datasets: QAOA\_5, QAOA\_7, QML\_5 and QML\_7. We note that VIC is applicable only to QAOA circuits. Therefore, we include comparisons with VIC only for the QAOA\_5 and QAOA\_7 datasets. Through these comparisons, we demonstrate the practical efficiency of \NAMEA\ in optimizing fidelity.

Comparisons using the QAOA datasets, including QAOA\_5 and QAOA\_7, are aimed at evaluating \NAMEA's performance under scenarios with a small number of dependencies among target gates. Figure~\ref{fig:testing:fidelity:QAOA_5} compares the performance of \NAMEA\ with other methods for the QAOA\_5 dataset. The results show that \NAMEA\ consistently outperforms all baselines in nearly all cases, regardless of the number of target gates or the noise model applied. Specifically, when compared to VIC, a method specialized in minimizing the depth of QAOA circuits, \NAMEA\ achieves higher fidelity improvements of $1.5\%$, $1.5\%$, $1.4\%$, $2.8\%$ and $1.8\%$ under depolarizing, bit-flip, phase-flip, mixed, and real noise models, respectively. This highlights that directly optimizing fidelity is more effective for improving circuit reliability than focusing on circuit depth reduction.

Furthermore, when compared to Qiskit, a widely used commercial tool for circuit transpilation, \NAMEA\ demonstrates even greater fidelity gains of $1.1\%$, $0.05\%$, $1.6\%$, $1.7\%$ and $3.6\%$ under the same noise models. Notably, the fidelity improvement achieved by \NAMEA\ over Qiskit becomes increasingly significant when the number of target gates grows. This observation highlights \NAMEA's capability to maintain high reliability in large-scale quantum circuit transpilation, a critical requirement as quantum computing continues to scale and tackle more complex problems.

Compared to GNN-RL, which employs a GNN-based fidelity estimator, we can observe how improvements in fidelity estimation influence overall performance. Specifically, \mbox{\NAMEA} achieves even greater fidelity gains of 5.5\%, 2.8\%, 6.6\%, 8.0\%, and 6.7\% under the same noise models. These results suggest that the GP-based surrogate model can lead to better overall performance compared to the GNN-based estimator.

Figure~\ref{fig:testing:fidelity:QAOA_7} presents a comparison of \NAMEA\ with Qiskit, VIC, and GNN-RL on the more complex QAOA\_7 dataset. The results demonstrate that \NAMEA’s performance advantage becomes even more pronounced for this dataset. Specifically, \mbox{\NAMEA} achieves fidelity improvements of $4.2\%$, $2.7\%$, $2.9\%$, $2.5\%$ and $5.8\%$ over VIC; $4.2\%$, $0.9\%$, $3.8\%$, $2.0\%$ and $6.5\%$ over Qiskit; and $3.6\%$, $3.8\%$, $3.0\%$, $3.0\%$ and $10.1\%$ over GNN-RL under the depolarizing, bit-flip, phase-flip, mixed, and real noise models, respectively. These results reaffirm the effectiveness of \NAMEA\ in generating highly reliable circuits for larger-scale quantum computations. An additional observation is that the fidelity improvement is particularly significant under the real noise model. This demonstrates \NAMEA’s robustness and adaptability to real-world quantum noise conditions.

Comparisons using the QML datasets, including QML\_5 and QML\_7, are aimed at evaluating \NAMEA's performance under scenarios with a large number of dependencies among target gates. Figure~\ref{fig:testing:fidelity:QML_5} compares \NAMEA\ with Qiskit and GNN-RL on the QML\_5 dataset. Despite the increased complexity of target gate dependencies, \NAMEA\ consistently outperforms both baselines. Specifically, \mbox{\NAMEA} achieves fidelity improvements of $2.9\%$, $1.0\%$, $4.8\%$, $5.6\%$ and $6.5\%$ compared to Qiskit and $2.0\%$, $2.5\%$, $2.8\%$, $4.1\%$ and $2.1\%$ compared to GNN-RL under the depolarizing, bit-flip, phase-flip, mixed, and real noise models, respectively. The performance gains of \NAMEA\ are even more significant in the 7-qubit system, as shown in Figure~\ref{fig:testing:fidelity:QML_7}. Here, \mbox{\NAMEA} achieves fidelity improvements of $4.1\%$, $1.7\%$, $6.1\%$, $5.7\%$, and $7.6\%$ over Qiskit, and $12.1\%$, $7.6\%$, $12.8\%$, $12.6\%$, and $9.3\%$ over GNN-RL under the same noise models.

\begin{figure*}[ht]
  \centering
  \includegraphics[width=1\textwidth]{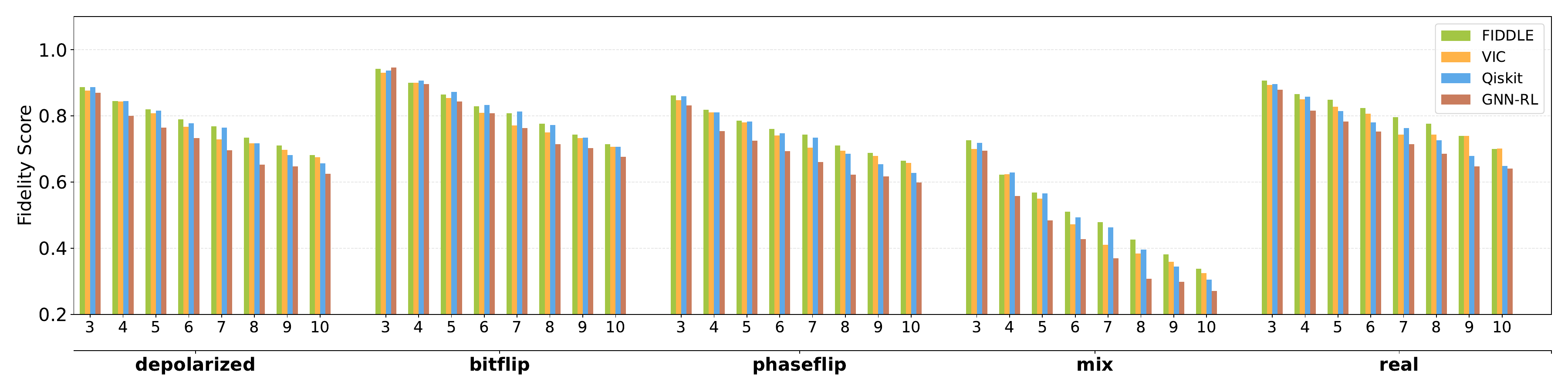}
  \caption{The comparison results for QAOA\_5.}
  \label{fig:testing:fidelity:QAOA_5}
\end{figure*}

\begin{figure*}[ht]
  \centering
  \includegraphics[width=1\textwidth]{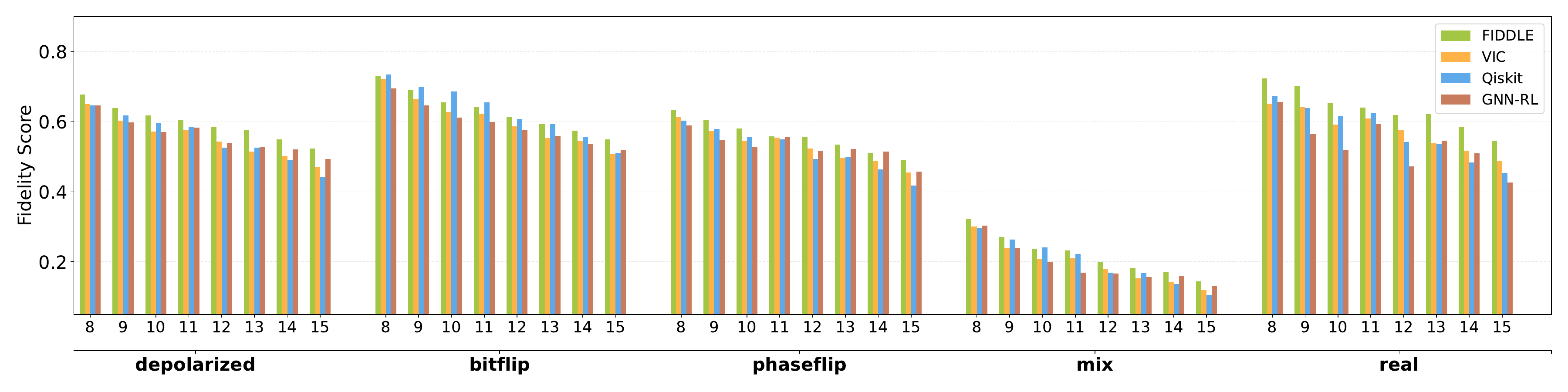}
  \caption{The comparison results for QAOA\_7.}
  \label{fig:testing:fidelity:QAOA_7}
\end{figure*}

\begin{figure*}[ht]
  \centering
  \includegraphics[width=1\textwidth]{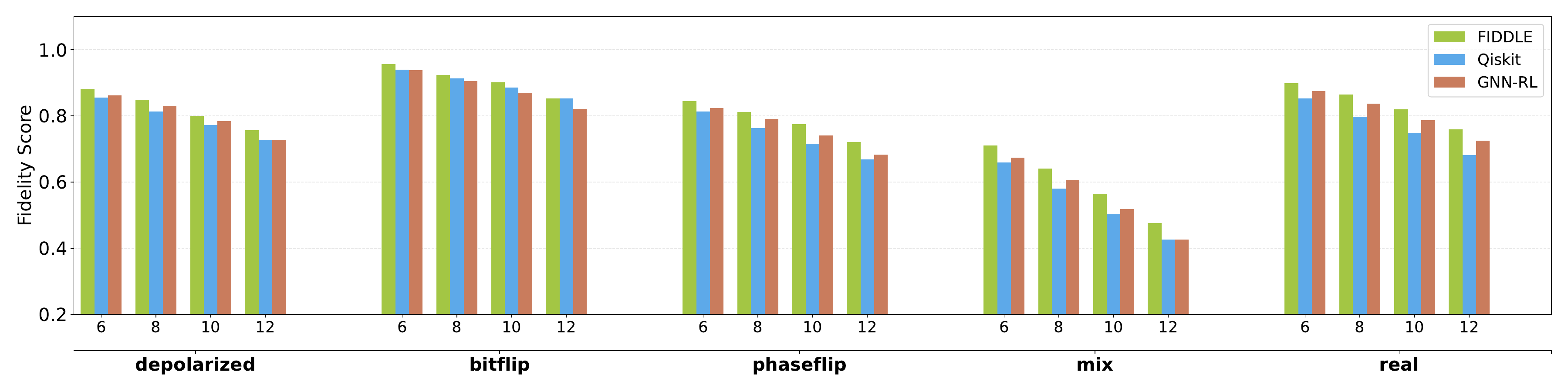}
  \caption{The comparison results for QML\_5.}
  \label{fig:testing:fidelity:QML_5}
\end{figure*}

\begin{figure*}[ht]
  \centering
  \includegraphics[width=1\textwidth]{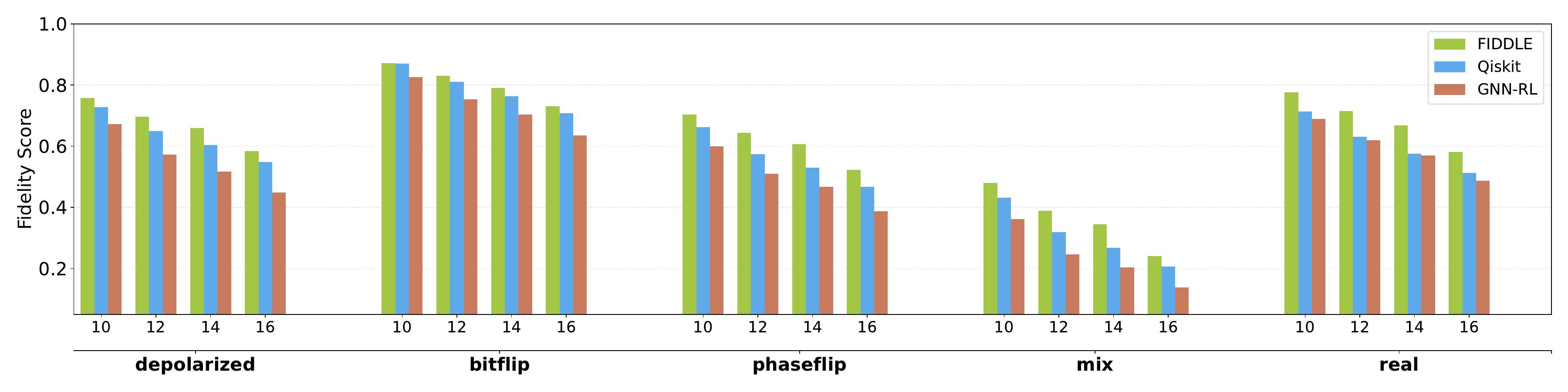}
  \caption{The comparison results for QML\_7.}
  \label{fig:testing:fidelity:QML_7}
\end{figure*}

\section{Conclusion}
\label{sec:conclusion}

In this work, we propose a learning-based framework, \NAMEA, to address two critical challenges of enhancing the reliability of quantum circuits in the NISQ era. In particular, our framework leverages a GP-based surrogate model to estimate the fidelity efficiently, even with limited training data, and an RL module to optimize gate sequences for fidelity. Rigorous evaluations against state-of-the-art fidelity estimation and routing optimization methods demonstrate that \NAMEA~ significantly improves the fidelity estimation and fidelity optimization across various noise models. As a result, this work lays the foundation for advancing gate-based quantum computing towards practical, scalable applications in the NISQ era and beyond.

\section*{Acknowledgements}
This work is supported by National Science Foundation (NSF) under award 2416606.

\bibliographystyle{ACM-Reference-Format}
\bibliography{bibliography}




\end{document}